\newcommand{\captionfonts}{\normalsize}
\long\def\@makecaption#1#2{%
  \vskip\abovecaptionskip
  \sbox\@tempboxa{{\captionfonts #1: #2}}%
  \ifdim \wd\@tempboxa >\hsize
    {\captionfonts #1: #2\par}
  \else
    \hbox to\hsize{\hfil\box\@tempboxa\hfil}%
  \fi
  \vskip\belowcaptionskip}
\newcommand{\comment}[1]{}
\newtheorem{proposition}{Proposition}
\newtheorem{remark}{Remark}
\newtheorem{theorem}{Theorem}
\newtheorem{corollary}{Corollary}
\newtheorem{lemma}{Lemma}
\begin{document}

\hspace{13.9cm}1

\ \vspace{25mm}\\

{\LARGE Relations among Some Low Rank Subspace\\ Recovery Models}

\ \\
{\bf \large Hongyang Zhang$^{\dag}$,  Zhouchen Lin$^{\dag}$,  Chao Zhang$^{\dag}$\footnotemark[1], Junbin Gao$^{\ddag}$}\\
{$^{\dag}$Key Laboratory of Machine Perception (MOE), School of EECS, Peking University, Beijing 100871, China.}\\
{$^{\ddag}$School of Computing and Mathematics, Charles Sturt University, Bathurst, NSW 2795, Australia.}\\
\footnotetext[1]{Corresponding author.}
%\ \\[-2mm]
{\bf Keywords:} Low Rank, Relations among Models, $\ell_{2,1}$ Filtering Algorithm

\thispagestyle{empty}
\markboth{}{NC instructions}
\ \vspace{-10mm}\\
%
%Abstract
\begin{center} {\bf Abstract} \end{center}
Recovering intrinsic low dimensional subspaces from data
distributed on them is a key preprocessing step to many applications. In
recent years, there has been a lot of work that models subspace
recovery as low rank minimization problems. We find that some
representative models, such as Robust Principal Component Analysis
(R-PCA), Robust Low Rank Representation (R-LRR), and Robust Latent
Low Rank Representation (R-LatLRR), are actually deeply connected.
More specifically, we discover that once a solution to one of the
models is obtained, we can obtain the solutions to
other models in \emph{closed-form} formulations. Since R-PCA is the simplest, our discovery makes it
the center of low rank subspace recovery models. Our work has two
important implications. First, R-PCA has a solid theoretical foundation. Under certain conditions, we could find better solutions to these low rank models at overwhelming probabilities, although these models are non-convex. Second, we can obtain
significantly faster algorithms for these models by solving R-PCA
first. The computation cost can be further cut by applying low
complexity randomized algorithms, e.g., our novel $\ell_{2,1}$
filtering algorithm, to R-PCA. Experiments verify the advantages of our algorithms over other state-of-the-art ones that are based on the alternating direction
method.

\section{Introduction}
Subspaces are the most commonly assumed structure for high
dimensional data due to their simplicity and effectiveness. For
example, motion~\citep{Tomasi1992}, face~\citep{Belhumeur:Fisherface,belhumeur1998set,Basri2003}, and
texture~\citep{Ma2} data have been known to be well characterized
by low dimensional subspaces. There has been a lot of effort on
robustly recovering the underlying subspaces of data. The most
widely adopted approach is Principal Component Analysis (PCA).
Unfortunately, PCA is known to be fragile to large noises or
outliers. So much work has been devoted to improving the
robustness of
PCA~\citep{gnanadesikan1972robust,huber2011robust,fischler1981random,Torre,ke2005robust}, among
which the Robust PCA (R-PCA)~\citep{Wright:RPCA,chandrasekaran2011rank,Candes} is probably
the only one with theoretical guarantees. \cite{Candes,chandrasekaran2011rank,Wright:RPCA} proved that under certain conditions
the ground truth subspace can be exactly recovered with an
overwhelming probability. Later work~\citep{hsu2011robust} gave a justification of R-PCA in the case where the spatial pattern of
the corruptions is deterministic.

Although R-PCA has found wide applications, such as video
denoising, background modeling, image alignment, photometric
stereo, and texture representation
\cite[see e.g.,][]{Wright:RPCA,Torre,Ji,Peng,ZhangZ}, it only aims at
recovering a single subspace that spans the whole data. To
identify finer structure of data, the multiple subspaces
recovery problem should be considered, which aims at clustering
data according to the subspaces they lie in. This problem has
attracted a lot of attention in recent
years~\citep{vidal2011subspace}. Rank minimization methods account
for a large class of subspace clustering algorithms, where rank is
connected to the dimensions of subspaces. Representative rank
minimization based methods include Low Rank Representation
(LRR)~\citep{LiuG3,LiuG1}, Robust Low Rank Representation
(R-LRR)~\citep{Wei,Vidal:LRSC} \footnote{Note that
\cite{Wei} and \cite{Vidal:LRSC} called R-LRR as Robust Shape
Interaction (RSI) and Low Rank Subspace Clustering (LRSC),
respectively. The two models are essentially the same, only
differing in the optimization algorithms. In order to remind the
readers that they are both robust versions of LRR by using a
denoised dictionary, in this paper we call them Robust Low Rank
Representation (R-LRR) instead.}, Latent Low Rank Representation
(LatLRR)~\citep{LiuG2,Zhang:Counterexample} and its robust version
(R-LatLRR)~\citep{Zhang:RobustLatLRR}. Nowadays, subspace clustering algorithms, including these low rank
methods, have been widely applied, e.g., to motion
segmentation~\citep{Gear,Costeira,Vidal2004CVPR,Yan,Rao}, image
segmentation~\citep{Yang2008CVIU,Cheng}, face
classification~\citep{Ho2003,Vidal2005PAMI,LiuG3,LiuG1}, and system
identification~\citep{Vidal2003CDC,ZhangC,Paoletti2007}.

\comment{ In many fields of artificial intelligence, such as
computer vision and machine learning, one often assumes the data
have some potential structures. For easy computation and high
effectiveness in real applications, these structures are usually
regarded as linear spaces. Traditional techniques, like subspace
clustering method, study the connection between the data point and
its corresponding space and are gaining more and more attentions
in recent years. There have existed many applications in practice.
For example, \cite{Gear,Yan,Rao} realized multi-body motion
segmentation by viewing each trajectory as a subspace. Other
applications, like image segmentation~\cite{Cheng}, face
classification~\cite{LiuG1,LiuG3} and system
identification~\cite{ZhangC}, can also be ascribed to subspace
clustering problem.

There are five main existing methods for subspace clustering:
factorization based methods \cite{Costeira,Gruber,Vidal}, algebra
based methods \cite{Ma}, statistical learning based methods
\cite{Ma2,Yang}, sparsity based methods \cite{Elhamifar} and low
rank based methods \cite{LiuG1,LiuG2,Wei,LiuG3}, among which low
rank based methods have their own advantages and importance. One
of the superiorities is that, for most low rank subspace models,
closed form solutions are optimal in the noiseless case, thus
offering us the insights on theoretical analysis. Segmentation
criterions, like the robustness and the speed, are also
competitive when compared with other state of the art models.

As a conventional low rank technique, Principal Component Analysis
(PCA) analyzes the data in the high dimensional space and tries to
find a single low dimensional one to best fit them in the meaning
of Frobenius norm. However, designed for the Gaussian noise, this
method can not resist to other kinds of noises well. More recent
techniques, like Robust Principal Component Analysis (robust PCA),
have been proposed and overcome this drawback. It has been
reported in \cite{Candes} that robust PCA is much more robust than
PCA. Nowadays, robust PCA has been widely used in many tasks, such
as video denoising, background modeling, image alignment,
photometric stereo and texture representation (see e.g.,
\cite{Wright:RPCA,Ji,Torre,Peng,ZhangZ}). Inspired by robust PCA,
Low Rank Representation (LRR) is a brand new technique of subspace
clustering. As a classic low rank based method, LRR usually has
superb performance \cite{LiuG2,LiuG1}. In the noiseless situation,
LRR uses the data itself as the dictionary and looks for the
representation matrix with the lowest rank. However, in the case
of noisy data, for establishing a convex optimization problem, LRR
represents the clean data by the noisy one. That may be sounded
unreasonable for using the noisy data as the dictionary, and this
indeed limits the robustness of clustering \cite{Wei}. Hence,
shown as an improved model of LRR, Robust Shape Interaction (RSI)
\cite{Wei,Favaro} tries to change the dictionary into the clean
data and as a result obtains a better clustering accuracy. There
exist two main algorithms for this model, one of which is to
search for the local solution to RSI since the model is non-convex
\cite{Favaro}. \cite{Wei} proposed an approximated computation via
solving relaxed version of robust PCA first and then conducting
the orthogonal projection. Surprisingly, based on our theoretical
analysis throughout the paper, this method could obtain the global
solution to RSI while just under suitable conditions. There are
also other models for improving the robustness of LRR, such as
Latent LRR (LatLRR) \cite{LiuG3}. By introducing the concept of
latent variables into the LRR model, LatLRR could overcome the
shortcoming of insufficient sampling of LRR and obtains a better
performance. It also seamlessly integrates subspace clustering and
feature extraction into a unified framework, and as a result
provides us with the possibility for realizing subspace clustering
and feature extraction simultaneously. }

\subsection{Our Contributions}
In this paper, we show that some of the low rank subspace recovery models are actually deeply connected, even though they were proposed independently and targeted different problems (single or multiple subspaces recovery). Our
discoveries are based on a characteristic of low rank recovery
models. Namely, they may have closed-form solutions. Such a
characteristic has not been found in sparsity based models, e.g.,
Sparse Subspace Clustering~\citep{Elhamifar}.

There are two main contributions of this paper:
\begin{itemize}
\item We find a close relation between
R-LRR~\citep{Wei,Vidal:LRSC} and R-PCA~\citep{Wright:RPCA,Candes},
showing that, surprisingly, their solutions are mutually
expressible. Similarly,
R-LatLRR~\citep{Zhang:RobustLatLRR} and R-PCA are closely
connected too. Namely, their solutions are also mutually
expressible. Our analysis allows an arbitrary regularizer for the
noise term. \item Since R-PCA is the simplest low rank recovery model, our analysis naturally positions R-PCA at the center of existing low rank recovery models. In particular, we propose to first apply R-PCA to the data and then use the solution of R-PCA to obtain the solution for other models. This approach has two important implications. First, although R-LRR and
R-LatLRR are non-convex problems, under certain conditions we can
obtain better solutions with an overwhelming probability. Namely, if the noiseless data are sampled from a union of independent subspaces and the dimension of the subspace containing the union of subspaces is much smaller than the dimension of the ambient space, we are able to recover exact subspaces structure as long as the noises are sparse (even the magnitudes of noise are arbitrarily large). Second, solving R-PCA is much faster than solving other models. The computation cost could be further cut if we solve
R-PCA by randomized algorithms. For example, we propose the
$\ell_{2,1}$ filtering algorithm to solve R-PCA when the noise term
uses $\ell_{2,1}$ norm (see Table~\ref{table: notations} for
definition). Experiments verify the significant advantages of our
algorithms.
\end{itemize}

The remainder of this paper is organized as follows. Section
\ref{section:previous works} reviews the representative low rank
models for subspace recovery. Section \ref{section: Theoretical
Relationships between Low Rank Models} gives our theoretical
results, i.e., the inter-expressibility among the solutions of
R-PCA, R-LRR, and R-LatLRR. In Section \ref{section: proofs of the
main results}, we present detailed proofs of our theoretical
results. Section \ref{section:applications of theoretical
analysis} gives two implications of our theoretical analysis,
i.e., better solutions and faster algorithms. We show the
experimental results on both synthetic and real data in Section
\ref{section:experiments}. Finally, we conclude the paper.

\section{Related Work}
\label{section:previous works} In this section, we review a number
of existing low rank models for subspace recovery.

\subsection{Notations and Naming Conventions}
Before start, we define some notations that we will use. Table
\ref{table: notations} summarizes the main notations that will
appear in this paper.
\makeatletter\def\@captype{table}\makeatother
\begin{table}
\caption{Summary of main notations used in this paper.}
\label{table: notations}
\begin{center}
\begin{tabular}{c|l}
\hline
Notations & Meanings\\
\hline
Capital letter & A matrix.\\
$m$, $n$ & Size of the data matrix $M$.\\
$n_{(1)}$, $n_{(2)}$ & $n_{(1)}=\max\{m,n\}$, $n_{(2)}=\min\{m,n\}$.\\
$\log$ & Natural logarithm.\\
$I$, $\boldsymbol{0}$, $\boldsymbol{1}$ & The identity matrix, all-zero matrix, and all-one vector.\\
$e_i$ & Vector whose ith entry is 1 and others are 0s.\\
$M_{:j}$ & The $j$th column of matrix $M$.\\
$M_{ij}$ & The entry at the $i$th row and $j$th column of matrix $M$.\\
$M^T$ & Transpose of matrix $M$.\\
$M^\dag$ & Moore-Penrose pseudo-inverse of matrix $M$.\\
$|M|$ & $|M|_{ij}=|M_{ij}|$, $i=1,...,m, j=1,...,n$.\\
$||\cdot||_2$ & Euclidean norm for a vector, $||v||_2=\sqrt{\sum_{i}v_{i}^2}$.\\
$||\cdot||_*$ & Nuclear norm of a matrix (the sum of its singular values).\\
$||\cdot||_{\ell_0}$ & $\ell_0$ norm of a matrix (the number of non-zero entries).\\
$||\cdot||_{\ell_{2,0}}$ & $\ell_{2,0}$ norm of a matrix (the number of non-zero columns).\\
$||\cdot||_{\ell_1}$ & $\ell_1$ norm of a matrix, $||M||_{\ell_1}=\sum_{i,j}|M_{ij}|$. \\
$||\cdot||_{\ell_{2,1}}$ & $\ell_{2,1}$ norm of a matrix,
$||M||_{\ell_{2,1}}=\sum_{j}||M_{:j}||_2$.\\
$||\cdot||_F$ & Frobenius norm of a matrix, $||M||_F=\sqrt{\sum_{i,j}M_{ij}^2}$.\\
\hline
\end{tabular}
\end{center}
\end{table}

Since this paper involves multiple subspace recovery models, to
minimize confusion we name the models that minimize rank functions
and nuclear norms as the \emph{original} model and the
\emph{relaxed} model, respectively. We also name the models that
utilize the denoised data matrices for dictionaries as
\emph{robust} models, with a prefix ``R-".

\subsection{Robust Principal Component Analysis}\label{sec:RPCA}
Robust Principal Component Analysis
(R-PCA)~\citep{Wright:RPCA,Candes} is a robust version of PCA.
R-PCA aims at recovering a hidden low dimensional subspace from
the observed high dimensional data which have unknown sparse
corruptions. The low dimensional subspace and sparse corruptions
correspond to a low rank matrix $A_0$ and a sparse matrix $E_0$, respectively.
So the mathematical formulation of R-PCA is as follows:
\begin{equation}
\label{equ:original RPCA model}
\min_{A,E} \mbox{rank}(A)+\lambda||E||_{\ell_0},\ \ \mbox{s.t.}\ \ X=A+E,
\end{equation}
where $X=A_0+E_0\in \mathbb{R}^{m\times n}$ is the observation with data
samples being its columns and $\lambda>0$ is a regularization
parameter.

Since solving the original R-PCA is NP-hard, which prevents the
practical use of R-PCA, \cite{Candes} proposed solving its convex
surrogate, called Principal Component Pursuit or relaxed R-PCA
by our naming conventions, defined as follows:
\begin{equation}
\label{equ:relaxed RPCA model} \min_{A,E}
||A||_*+\lambda||E||_{\ell_1},\ \ \mbox{s.t.}\ \ X=A+E.
\end{equation}
This relaxation makes use of two facts. First, the nuclear norm is the convex envelope
of rank within the unit ball of matrix operation norm. Second,
the $\ell_1$ norm is the convex envelope of the $\ell_0$ norm.
\cite{Candes} further proved that when the rank of the structure
component $A_0$ is $O(n/\log^2 m)$ and $A_0$ is non-sparse (see incoherent conditions \eqref{equ: incoherence 1} and \eqref{equ: another incoherence}), and the number of
non-zeros of the noise matrix $E_0$ is $O(mn)$ (it is
remarkable that the magnitudes of noise could be arbitrarily
large), the solution of the convex relaxed R-PCA problem \eqref{equ:relaxed RPCA model} perfectly recovers the ground truth data matrix $A_0$ and noise matrix $E_0$ with an overwhelming probability. %One can
%see that the conditions for relaxed R-PCA to exactly recover the
%ground truth is rather mild.

\comment{ Another remarkable benefit of PCP is that there is no
tuning parameter in the algorithm, as $\lambda=1/\sqrt{n}$ has
been demonstrated to be the best in theory, where $n$ is the
dimension of the matrix. }

\subsection{Low Rank Representation}
While R-PCA works well for a \emph{single} subspace with sparse
corruptions, it is unable to identify \emph{multiple} subspaces,
which is the main target of the subspace clustering problem. To
overcome this drawback, \cite{LiuG2,LiuG1} proposed Low Rank
Representation (LRR) modeled as follows:
\begin{equation}
\label{equ:noisy LRR} \min_{Z,E}
\mbox{rank}(Z)+\lambda||E||_{\ell_{2,0}}, \ \mbox{s.t.} \ X=XZ+E.
\end{equation}
The idea of LRR is to self-express the data, i.e., using data
itself as the dictionary, and then find the lowest-rank
representation matrix, supposing that the corruptions are sparse.
The pattern in the optimal $Z$, i.e., block diagonal structure, can help
identify the subspaces.

Again, due to the NP-hardness of the original LRR,
\cite{LiuG2,LiuG1} proposed solving the relaxed LRR instead:
\begin{equation}
\label{equ:relaxed noisy LRR} \min_{Z,E}
||Z||_*+\lambda||E||_{\ell_{2,1}}, \ \mbox{s.t.} \ X=XZ+E,
\end{equation}
where the $\ell_{2,1}$ norm is the convex envelope of the $\ell_{2,0}$
norm. They proved that if the fraction of corruptions does not
exceed a threshold, the row space of the ground truth $Z$ and the
indices of non-zero columns of the ground truth $E$ can be exactly
recovered~\citep{LiuG1}.

\subsection{Robust Low Rank Representation (Robust Shape Interaction and Low Rank Subspace Clustering)}\label{sec:RSI}
As mentioned above, LRR uses the data matrix itself as the
dictionary to represent data samples. This is not very reasonable
when the data contain severe noises or outliers. To remedy this
issue, \cite{Wei} suggested using denoised data as the dictionary
to express itself, resulting in the following model:
\begin{equation}
\label{equ:original RSI} \min_{Z,E}
\mbox{rank}(Z)+\lambda||E||_{\ell_{2,0}}, \ \mbox{s.t.} \
X-E=(X-E)Z.
\end{equation}
It is called the original Robust Shape Interaction (RSI) model.
Again, it has a relaxed version:
\begin{equation}
\label{equ:relaxed RSI} \min_{Z,E}
||Z||_*+\lambda||E||_{\ell_{2,1}}, \ \mbox{s.t.} \ X-E=(X-E)Z,
\end{equation}
by replacing rank and $\ell_{2,0}$ with their respective convex
envelopes.

Note that the relaxed RSI is still non-convex due to its
bilinear constraint, which may cause difficulty in finding its
globally optimal solution. \cite{Wei} first proved the following
result on the relaxed noiseless LRR, which is also the noiseless version of the relaxed RSI.
\begin{proposition}
\label{prop:solution to relaxed noiseless LRR}
The solution to relaxed noiseless LRR (RSI):
\begin{equation}
\label{equ:relaxed noiseless LRR} \min_Z \|Z\|_*, \ \ \mbox{s.t.}
\ \ A=AZ,
\end{equation}
is unique and given by $Z^*=V_AV_A^T$, where $U_A\Sigma_A V_A^T$
is the skinny SVD of $A$.
\end{proposition}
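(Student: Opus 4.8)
The plan is to establish the statement in three stages: feasibility of the proposed minimizer, a matching lower bound that proves optimality, and finally uniqueness, which I expect to be the delicate part. First I would verify that $Z^{*}=V_{A}V_{A}^{T}$ is feasible. Writing the skinny SVD as $A=U_{A}\Sigma_{A}V_{A}^{T}$ with $U_{A}^{T}U_{A}=V_{A}^{T}V_{A}=I_{r}$ (where $r=\text{rank}(A)$) and $\Sigma_{A}$ invertible, a one-line computation gives $AZ^{*}=U_{A}\Sigma_{A}V_{A}^{T}V_{A}V_{A}^{T}=U_{A}\Sigma_{A}V_{A}^{T}=A$. Moreover $P:=V_{A}V_{A}^{T}$ is the orthogonal projector onto the row space of $A$, so it has $r$ singular values equal to $1$ and $\|Z^{*}\|_{*}=r$. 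The key structural observation is that every feasible $Z$ is forced to reproduce $V_{A}^{T}$: left-multiplying the constraint $AZ=A$ by $\Sigma_{A}^{-1}U_{A}^{T}$ gives $V_{A}^{T}Z=V_{A}^{T}$, i.e. $PZ=P=Z^{*}$.

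Optimality then follows from the submultiplicative inequality $\|PZ\|_{*}\le\|P\|_{\text{op}}\|Z\|_{*}=\|Z\|_{*}$: since $PZ=Z^{*}$ for every feasible $Z$, we get $\|Z\|_{*}\ge\|Z^{*}\|_{*}=r$, so $Z^{*}$ attains the minimum over the feasible set.

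The main obstacle is uniqueness. To handle it I would rotate into a convenient basis: completing $V_{A}$ to an orthogonal matrix $Q$ turns $P$ into $\text{diag}(I_{r},0)$ while preserving both the nuclear norm and the constraint, so it suffices to argue for $\tilde Z=Q^{T}ZQ$. The relation $PZ=P$ now pins down the top block rows, forcing $\tilde Z=\begin{pmatrix} I_{r} & 0 \\ Z_{21} & Z_{22}\end{pmatrix}$. Restricting to the first $r$ columns cannot increase the nuclear norm, and $\left\|\begin{pmatrix} I_{r} \\ Z_{21}\end{pmatrix}\right\|_{*}=\sum_{i}\sqrt{1+\sigma_{i}(Z_{21})^{2}}\ge r$ with equality iff $Z_{21}=0$; once $Z_{21}=0$ the matrix is block diagonal and $\|\tilde Z\|_{*}=r+\|Z_{22}\|_{*}$, which forces $Z_{22}=0$. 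Hence equality in the optimality bound holds only at $Z=Z^{*}$, giving uniqueness. The only points needing care are the two standard nuclear-norm facts used above—that deleting columns cannot increase the nuclear norm and that the norm is additive on block-diagonal matrices—both of which I would record as short lemmas.
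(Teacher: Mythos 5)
Your proof is correct. All three stages hold up: feasibility is a one-line computation; the identity $V_A^TZ=V_A^T$ (hence $PZ=P$ with $P=V_AV_A^T$) for every feasible $Z$ combined with $\|PZ\|_*\le\|P\|_{\mathrm{op}}\|Z\|_*$ gives the lower bound $\|Z\|_*\ge r$; and the uniqueness argument via the rotation $\tilde Z=Q^TZQ$, the column-deletion inequality, the formula $\sum_i\sqrt{1+\sigma_i(Z_{21})^2}$ for the nuclear norm of $\bigl[\begin{smallmatrix}I_r\\ Z_{21}\end{smallmatrix}\bigr]$, and additivity of the nuclear norm on block-diagonal matrices correctly forces $Z_{21}=0$ and then $Z_{22}=0$. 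The two auxiliary facts you flag are indeed standard and both follow from the Schatten--H\"older inequality $\|MS\|_*\le\|M\|_*\|S\|_{\mathrm{op}}$, so nothing is missing.

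For comparison: the paper itself gives no proof of this proposition --- it is quoted as a known result from the cited works of Wei et al.\ and Liu et al.\ (Proposition \ref{prop:solution to relaxed noiseless LRR} is stated with attribution only, and the related uniqueness lemma appears only in a commented-out block). So there is no in-paper argument to measure yours against; what you have written is a complete, self-contained derivation in the spirit of the standard proof in the LRR literature, which likewise hinges on the observation that feasibility forces $V_A^TZ=V_A^T$ and then exploits unitary invariance and block structure of the nuclear norm. Your version is, if anything, slightly cleaner in isolating the two nuclear-norm lemmas explicitly.
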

\begin{remark}
$V_A V_A^T$ can also be written as $A^\dag A$.
\eqref{equ:relaxed noiseless LRR} is a relaxed version of the
original noiseless LRR:
\begin{equation}
\label{equ:original noiseless LRR} \min_Z \mbox{rank}(Z), \ \
\mbox{s.t.} \ \ A=AZ.
\end{equation}
\end{remark}

$V_AV_A^T$ is called the Shape Interaction Matrix in the field of
structure from motion \citep{Costeira}. Hence model \eqref{equ:original RSI} is named Robust
Shape Interaction. $V_AV_A^T$
is block diagonal when the column vectors of $A$ lie strictly on
independent subspaces. The block diagonal pattern reveals the structure
of each subspace and therefore offers the possibility of subspace
clustering.

%We then solve problem \eqref{equ:relaxed RSI} with $E$ kept fixed. In this case, $X-E$ is fixed, and so the optimal solution for $Z$ is $Z=V_{X-E}V_{X-E}^T$. Therefore, $||Z||_*=\mbox{rank}(X-E)$,
%and one can see that \eqref{equ:relaxed RSI} reduces to
%\begin{equation}
%\label{equ: CSRPCA}
%\min_{E} \mbox{rank}(X-E)+\lambda||E||_{\ell_{2,1}},
%\end{equation}
%which is equivalent to the original R-PCA \eqref{equ:original RPCA
%model}, except for the norm of the noise term $E$. Once more, this
%problem could be effectively approximated by a convex program:
%\begin{equation}
%\label{equ: CSPCP} \min_{A,E} ||A||_*+\lambda||E||_{\ell_{2,1}},\
%\ \mbox{s.t.}\ \ X=A+E,
%\end{equation}
%which we call the column sparse relaxed R-PCA as the
%$\ell_{2,1}$ norm encourages $E$ to be column sparse.

\cite{Wei} proposed to first solve the optimal $A^*$ and $E^*$ from
\begin{equation}
\label{equ: CSPCP} \min_{A,E} ||A||_*+\lambda||E||_{\ell_{2,1}},\
\ \mbox{s.t.}\ \ X=A+E,
\end{equation}
which we call the column sparse relaxed R-PCA since it is the convex relaxation of the original problem:
\begin{equation}
\label{equ: CSRPCA} \min_{A,E} \mbox{rank}(A)+\lambda||E||_{\ell_{2,0}},\
\ \mbox{s.t.}\ \ X=A+E.
\end{equation}
Then \cite{Wei} used $(Z^*,E^*)$ as the solution to the relaxed RSI
problem \eqref{equ:relaxed RSI}, where $Z^*$ is the Shape
Interaction Matrix of $A^*$ by Proposition \ref{prop:solution to relaxed noiseless LRR}. In this way, they deduced
the optimal solution. We will prove in Section~\ref{section:
proofs of the main results} that this is indeed true and actually
holds for arbitrary functions on $E$.

It is worth noting that~\cite{Xu:CSPCP} proved that problem
\eqref{equ: CSPCP} is capable of exactly recognizing the sparse
outliers and simultaneously recovering the column space of the
ground truth data under rather broad conditions. Our former work
\citep{Zhang2014NIPS} further showed that the parameter $\lambda=1/\sqrt{\log n}$
guarantees the success of the model, even when the rank of the intrinsic matrix
and the number of non-zero columns of the noise matrix are almost $O(n)$, where $n$ is the column number
of the input.

As a closely connected work of RSI, \cite{Favaro} and
\cite{Vidal:LRSC} proposed a similar model, called Low Rank
Subspace Clustering (LRSC):
\begin{equation}
\label{equ: LRSC} \min_{Z,A,E} ||Z||_*+\lambda||E||_{\ell_{1}}, \
\mbox{s.t.} \ X=A+E,A=AZ.
\end{equation}
One can see that LRSC only differs from RSI~\eqref{equ:relaxed
RSI} by the norm of $E$. The other difference is that LRSC adopts
the alternating direction method (ADM)~\citep{Lin2} to solve
\eqref{equ: LRSC}.

In order not to confuse the readers and to highlight that both RSI
and LRSC are robust versions of LRR, in the sequel we call them
Robust LRR (R-LRR) instead as our theoretical analysis allows for
arbitrary functions on $E$.

\comment{In this paper, unlike \cite{Wei} which has no strict
deduction, we show detailed demonstrations about the relationships
between problem \eqref{equ:original RPCA model},
\eqref{equ:original RSI} and \eqref{equ:relaxed RSI}. Our theory
focuses on the sparsity norm of the noise term $E$ as any
function, hence being much general. Based on our theory, more
complex problem \eqref{equ:original RSI} and \eqref{equ:relaxed
RSI} could be solved by studying relatively simple problem
\eqref{equ:original RPCA model}. Thus, we make up for the
theoretical blank about why the methods in \cite{Wei,Vidal:LRSC}
work well in practice. }

\subsection{Robust Latent Low Rank Representation}
Although LRR and R-LRR have been successful in applications such as
face recognition~\citep{LiuG2,LiuG1,Wei}, motion
segmentation~\citep{LiuG2,LiuG1,Favaro}, and image
classification~\citep{Bull:imageclassification,Zhang:dictionaryLRR},
they break down when the samples are insufficient, especially when
the number of samples is less than the dimensions of subspaces.
\cite{LiuG3} addressed this small sample problem by introducing
hidden data $X_H$ into the dictionary:
\begin{equation}
\label{equ: hidden LatLRR} \min_{R} \|R\|_*,\ \ \mbox{s.t.}\ \
X=[X,X_H]R.
\end{equation}
Obviously, it is impossible to solve problem \eqref{equ: hidden
LatLRR} because $X_H$ is unobserved. Nevertheless, by utilizing
Proposition~\ref{prop:solution to relaxed noiseless LRR},
\cite{LiuG3} proved that $X$ can be written as $X=XZ+LX$, where
both $Z$ and $L$ are low rank, resulting in the following Latent
Low Rank Representation (LatLRR) model:
\begin{equation}
\label{equ:noisy LatLRR model} \min_{Z,L,E}
\mbox{rank}(Z)+\mbox{rank}(L)+\lambda||E||_{\ell_0}, \ \mbox{s.t.}
\ X=XZ+LX+E,
\end{equation}
where sparse corruptions are considered. As a common practice, its
relaxed version is solved instead:
\begin{equation}
\label{equ:noisy nuclear_norm_LatLRR model} \min_{Z,L,E}
||Z||_*+||L||_*+\lambda||E||_{\ell_1}, \ \mbox{s.t.} \ X=XZ+LX+E.
\end{equation}

As in the case of LRR, when the data is very noisy or highly
corrupted, it is inappropriate to use $X$ itself as the
dictionary. So \cite{Zhang:RobustLatLRR} borrowed the idea of
R-LRR to use denoised data as the dictionary, giving rise to the
following Robust Latent LRR (R-LatLRR) model:
\begin{equation}
\label{equ:Robust LatLRR} \min_{Z,L,E}
\mbox{rank}(Z)+\mbox{rank}(L)+\lambda||E||_{\ell_0}, \ \mbox{s.t.}
\ X-E=(X-E)Z+L(X-E),
\end{equation}
and its relaxed version:
\begin{equation}
\label{equ:relaxed_Robust LatLRR} \min_{Z,L,E}
||Z||_*+||L||_*+\lambda||E||_{\ell_1}, \ \mbox{s.t.} \
X-E=(X-E)Z+L(X-E).
\end{equation}
Again the relaxed R-LatLRR model is non-convex. Surprisingly,
\cite{Zhang:Counterexample} proved that when there is no noise,
both the original R-LatLRR and relaxed R-LatLRR have
\emph{non-unique} closed-form solutions and they described the
complete solution sets. So like RSI, \cite{Zhang:Counterexample}
proposed applying R-PCA to separate $X$ into $X=A^*+E^*$. Next,
they found the sparsest solution among the solution set of
relaxed noiseless R-LatLRR:
\begin{equation}
\label{equ:noiseless nuclear norm LatLRR model} \min_{Z,L}
||Z||_*+||L||_*,\ \ \mbox{s.t.}\ \ A=AZ+LA,
\end{equation}
with $A$ being $A^*$. \eqref{equ:noiseless nuclear norm LatLRR
model} is a relaxed version of the original noiseless R-LatLRR
model:
\begin{equation}
\label{equ:noiseless rank LatLRR model} \min_{Z,L}
\mbox{rank}(Z)+\mbox{rank}(L),\ \ \mbox{s.t.}\ \ A=AZ+LA.
\end{equation}

In Section \ref{section: proofs of the main results}, we will
prove that the above two step procedure actually solves
\eqref{equ:relaxed_Robust LatLRR} correctly. More in-depth
analysis will also be provided.

\comment{Because of the discrete nature of the rank function,
doing as robust PCA, LRR uses the nuclear norm to approximate the
rank function, hence giving rise to the following formulation
instead:
\begin{equation}
\label{equ:relaxed noiseless LRR} \min_Z ||Z||_*, \ \ \mbox{s.t.}
\ \ A=AZ.
\end{equation}
}

\comment{Our former work on robust
LatLRR~\citep{Zhang:RobustLatLRR} suggested to use robust PCA
denoising the data first and then solve noiseless LatLRR
\eqref{equ:noiseless nuclear norm LatLRR model} with its closed
form solutions~\citep{Zhang:Counterexample}. In this paper, we
show that this procedure just follows the unified model
\eqref{equ:Robust LatLRR}. That is to say, model \eqref{equ:Robust
LatLRR} also has closed relationship with that of robust PCA
\eqref{equ:original RPCA model}.}

\subsection{Other Low Rank Models for Subspace Clustering}
\comment{\cite{Zhuang,LiuR2,Wang} also gave deformed version of
LRR. As a combination of sparsity based method~\citep{Elhamifar}
and LRR, \cite{Zhuang} proposed Non-Negative Low Rank and Sparse
Representation (NNLRSR) for semi-supervised learning tasks. It was
reported that NNLRSR could capture both the global mixture of
subspaces structure and the locally linear structure of the data,
thus being both discriminative and generative.}

In this subsection, we mention more low rank subspace recovery
models, although they are not our focus in this paper. Also aiming
at addressing the small sample issue, \cite{LiuR2} proposed Fixed
Rank Representation by requiring that the representation
matrix to be as close to a rank $r$ matrix as possible, where $r$
is a prescribed rank. Then the best rank $r$ matrix, which still
has the block diagonal structure, is used for subspace clustering.
\cite{Wang} extended LRR to address nonlinear multi-manifold
segmentation, where the error $E$ is regularized by the square of
Frobenius norm so that the kernel trick can be used. In \citep{Ni},
the authors augmented the LRR model with a semi-definiteness
constraint on the representation matrix $Z$. In contrast, the
representation matrices by R-LRR and R-LatLRR are both naturally
semi-definite as they are Shape Interaction Matrices.

\section{Main Results -- Relations among Low Rank Models}
\label{section: Theoretical Relationships between Low Rank Models}
In this section, we present the hidden connections among
representative low rank recovery models: R-PCA, R-LRR, and
R-LatLRR, although they appear different and have been proposed
for different purposes. Actually, our analysis holds for more
general models where the regularization on noise term $E$ can be
arbitrary. More specifically, the generalized models are:
\begin{equation}
\label{equ: generalized RPCA} \min_{A,E} \mbox{rank}(A)+\lambda
f(E),\ \ \mbox{s.t.}\ \ X=A+E,
\end{equation}
\begin{equation}
\label{equ: generalized PCP} \min_{A,E} \|A\|_*+\lambda f(E),\ \
\mbox{s.t.}\ \ X=A+E,
\end{equation}
\begin{equation}
\label{equ: generalized rank RSI} \min_{Z,E}
\mbox{rank}(Z)+\lambda f(E), \ \mbox{s.t.} \ X-E=(X-E)Z,
\end{equation}
\begin{equation}
\label{equ: generalized nuclear norm RSI} \min_{Z,E}
||Z||_*+\lambda f(E), \ \mbox{s.t.} \ X-E=(X-E)Z,
\end{equation}
\begin{equation}
\label{equ: generalized rank Robust LatLRR}
\min_{Z,L,E} \mbox{rank}(Z)+\mbox{rank}(L)+\lambda f(E),\ \ \mbox{s.t.}\ \ X-E=(X-E)Z+L(X-E),
\end{equation}
\begin{equation}
\label{equ: generalized nuclear norm Robust LatLRR} \min_{Z,L,E}
||Z||_*+||L||_*+\lambda f(E),\ \ \mbox{s.t.}\ \ X-E=(X-E)Z+L(X-E),
\end{equation}
where $f$ is any function. For brevity, we still call \eqref{equ:
generalized RPCA}-\eqref{equ: generalized nuclear norm Robust
LatLRR} the original R-PCA, relaxed R-PCA, original R-LRR,
relaxed R-LRR, original R-LatLRR, and relaxed R-LatLRR,
respectively, without mentioning ``generalized."

We show that the solutions to the above models are mutually
expressible, i.e., if we have a solution to one of the models, we
will obtain the solutions to other models in
\emph{closed-form} formulations. We will further show in
Section~\ref{section:applications of theoretical analysis} that
such mutual expressibility is useful.

It suffices to show that the solutions of the original R-PCA and
those of other models are mutually expressible, i.e., letting the
original R-PCA hinge all the above models. We summarize our
results as the following theorems.

\comment{
\subsubsection{Motivation}
The motivation of this paper derives from following two existent
conclusions about problems \eqref{equ:original noiseless LRR} and
\eqref{equ:relaxed noiseless LRR}:
\begin{lemma}[\cite{Zhang:Counterexample}]
\label{lemma: solution to original LRR} Suppose $U_A\Sigma_AV_A^T$
is the skinny SVD of $A$. Then the minimum objective function
value of the original LRR problem \eqref{equ:original noiseless
LRR} is rank($A$) and the complete solutions to
\eqref{equ:original noiseless LRR} are $Z^*=A^\dag A+SV_A^T$,
where $S$ is any matrix such that $V_A^TS=0$.
\end{lemma}
\begin{lemma}[\cite{LiuG1}]
\label{lemma: solution to relaxed LRR} The minimum objective
function value of the relaxed LRR problem \eqref{equ:relaxed
noiseless LRR} is rank($A$) and corresponding unique minimizer
formalizes as $Z^*=A^\dag A$.
\end{lemma}
A simple observation based on Lemmas \ref{lemma: solution to
original LRR} and \ref{lemma: solution to relaxed LRR} illustrates
the truth that problems \eqref{equ: generalized rank RSI} and
\eqref{equ: generalized nuclear norm RSI} are related to robust
PCA problem \eqref{equ: generalized RPCA}: viewing $X-E$ as an
entity, the optimal objective function values corresponding to
\eqref{equ: generalized rank RSI} and \eqref{equ: generalized
nuclear norm RSI} are both equal to rank$(X-E)+\lambda f(E)$,
which is just a deformed formulation of robust PCA problem
\eqref{equ: generalized RPCA}. Nevertheless, simply making a
conclusion of the equivalency based on such an observation is
problematic: on one hand, it is neither strict nor correct to
regard models \eqref{equ: generalized rank RSI}, \eqref{equ:
generalized nuclear norm RSI} and \eqref{equ: generalized RPCA} as
equivalent, since \eqref{equ: generalized rank RSI} and
\eqref{equ: generalized nuclear norm RSI} have not completely the
same solutions, which leads to a contradiction with the
equivalency. An interested reader may refer to our former
work~\citep{Zhang:Counterexample} for detailed discussion; on the
other hand, the observation does not reveal specific relationships
between the solutions to different models, and as a result is not
complete. Our latter discussion in this paper uses strict
mathematical deduction to present detailed relevance between
solutions to models \eqref{equ: generalized rank RSI}, \eqref{equ:
generalized nuclear norm RSI} and \eqref{equ: generalized RPCA}.
As our theories show, as long as one of models has been solved,
the other two problems could be easily solved via being
represented by that solution.

Our main results firstly reveal the relationships between subspace
clustering models \eqref{equ: generalized rank RSI}, \eqref{equ:
generalized nuclear norm RSI}, \eqref{equ: generalized rank Robust
LatLRR}, \eqref{equ: generalized nuclear norm Robust LatLRR} and
robust PCA, respectively. There are four theorems, each of which
presents a connection to robust PCA by analyzing a specific
model.}

\begin{theorem}[Connection between the original R-PCA and the original R-LRR]
\label{theorem: relationships about original RSI} For any
minimizer $(A^*,E^*)$ of the original R-PCA problem \eqref{equ:
generalized RPCA}, suppose $U_{A^*}\Sigma_{A^*}V_{A^*}^T$ is the
skinny SVD of the matrix $A^*$. Then $((A^*)^\dag
A^*+SV_{A^*}^T,E^*)$ is the optimal solution to the original R-LRR
problem \eqref{equ: generalized rank RSI}, where $S$ is any matrix
such that $V_{A^*}^TS=0$. Conversely, provided that $(Z^*,E^*)$ is
an optimal solution to the original R-LRR problem \eqref{equ:
generalized rank RSI}, $(X-E^*,E^*)$ is a minimizer of the
original R-PCA problem \eqref{equ: generalized RPCA}.
\end{theorem}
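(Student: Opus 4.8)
The plan is to treat the matrix $A := X - E$ as a single entity, so that the R-LRR constraint $X - E = (X-E)Z$ is exactly the noiseless self-expression constraint $A = AZ$. The entire argument then rests on one elementary fact: for a fixed $A$, the smallest rank of $Z$ compatible with $A = AZ$ is $\mathrm{rank}(A)$. First I would prove this lower bound. The identity $A = AZ$ says that each row of $A$ is a linear combination of the rows of $Z$, so the row space of $A$ is contained in that of $Z$ and hence $\mathrm{rank}(Z) \ge \mathrm{rank}(A)$. Denote by $v_{\mathrm{P}}$ and $v_{\mathrm{L}}$ the optimal values of the original R-PCA problem \eqref{equ: generalized RPCA} and the original R-LRR problem \eqref{equ: generalized rank RSI}. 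Then for \emph{any} feasible pair $(Z,E)$ of R-LRR, writing $A = X - E$,
\begin{equation*}
\mathrm{rank}(Z) + \lambda f(E) \ge \mathrm{rank}(X-E) + \lambda f(E),
\end{equation*}
and since $(X - E, E)$ is feasible for R-PCA, the right-hand side is at least $v_{\mathrm{P}}$. This yields $v_{\mathrm{L}} \ge v_{\mathrm{P}}$.

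Next I would show the two optimal values coincide by exhibiting a matching R-LRR point from an R-PCA minimizer. Given a minimizer $(A^*, E^*)$ with skinny SVD $A^* = U_{A^*}\Sigma_{A^*}V_{A^*}^T$, set $Z^* = (A^*)^\dag A^* + S V_{A^*}^T$ with $V_{A^*}^T S = 0$. Feasibility is a direct computation: using the Moore--Penrose identity $A^*(A^*)^\dag A^* = A^*$ and $A^* S = U_{A^*}\Sigma_{A^*}(V_{A^*}^T S) = 0$, we get $A^* Z^* = A^* + 0 = A^*$, so $(Z^*, E^*)$ is R-LRR feasible. The rank identity is the one point needing care: since $(A^*)^\dag A^* = V_{A^*}V_{A^*}^T$, we have $Z^* = (V_{A^*} + S)V_{A^*}^T$, and because $V_{A^*}^T S = 0$ the Gram matrix $(V_{A^*}+S)^T(V_{A^*}+S) = I + S^T S$ is positive definite; hence $V_{A^*} + S$ has full column rank and $\mathrm{rank}(Z^*) = \mathrm{rank}(A^*)$. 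The R-LRR objective at $(Z^*, E^*)$ is therefore $\mathrm{rank}(A^*) + \lambda f(E^*) = v_{\mathrm{P}}$. Combined with $v_{\mathrm{L}} \ge v_{\mathrm{P}}$, this forces $v_{\mathrm{L}} = v_{\mathrm{P}}$ and shows $(Z^*, E^*)$ is an R-LRR minimizer, proving the forward direction.

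For the converse, let $(Z^*, E^*)$ be an R-LRR minimizer. The pair $(X - E^*, E^*)$ is trivially R-PCA feasible, so I would chain the inequalities already established:
\begin{equation*}
v_{\mathrm{L}} = \mathrm{rank}(Z^*) + \lambda f(E^*) \ge \mathrm{rank}(X - E^*) + \lambda f(E^*) \ge v_{\mathrm{P}} = v_{\mathrm{L}}.
\end{equation*}
All inequalities are thus equalities, which forces $\mathrm{rank}(X - E^*) + \lambda f(E^*) = v_{\mathrm{P}}$; hence $(X - E^*, E^*)$ attains the R-PCA optimum and is a minimizer.

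I expect the main obstacle to be the rank identity $\mathrm{rank}(Z^*) = \mathrm{rank}(A^*)$: one must confirm that the freedom encoded by $S$ (the source of non-uniqueness in R-LRR) does not inflate the rank, and the orthogonality condition $V_{A^*}^T S = 0$ is precisely what guarantees this via the positive-definiteness of $I + S^T S$. Everything else --- feasibility and the two-sided comparison of optimal values --- is routine once the substitution $A = X - E$ is in place. It is worth noting explicitly that the argument never uses the form of $f$, so the result holds for an arbitrary regularizer on $E$, as claimed.
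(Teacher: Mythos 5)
Your proof is correct. It rests on the same reduction as the paper's --- eliminate $Z$ by observing that, for fixed $E$, the smallest achievable $\mathrm{rank}(Z)$ under $X-E=(X-E)Z$ is exactly $\mathrm{rank}(X-E)$ --- but you reach that fact by a different route. The paper invokes Lemma~\ref{lemma: solution to original LRR} (the complete closed-form solution set of noiseless LRR, quoted from earlier work) and runs two proofs by contradiction. You instead prove the two facts you actually need from scratch: the lower bound $\mathrm{rank}(Z)\ge\mathrm{rank}(X-E)$ via row-space inclusion, and achievability via the explicit computation that $(V_{A^*}+S)V_{A^*}^T$ has rank $\mathrm{rank}(A^*)$ because the Gram matrix $I+S^TS$ is positive definite. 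This makes the argument self-contained (you never need the assertion that the displayed family is the \emph{complete} solution set of the noiseless problem, only its optimal value), and organizing it as the value identity $v_{\mathrm{L}}=v_{\mathrm{P}}$ with a chain of inequalities is tidier than the paper's double contradiction; in particular your converse falls out of equalities already established rather than requiring a fresh contradiction with the auxiliary feasible point $(\widetilde{A}^\dag\widetilde{A},\widetilde{E})$. What the paper's route buys in exchange is uniformity: the same contradiction template transfers verbatim to Theorems~\ref{theorem: relationships about relaxed RSI}--\ref{theorem: relationships about relaxed Robust LatLRR} by swapping in the corresponding closed-form-solution lemmas, whereas your elementary rank argument is specific to the rank objective and would need the cited lemmas anyway for the nuclear-norm and LatLRR variants. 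Both proofs correctly observe that $f$ is never touched, so the regularizer is arbitrary.
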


\begin{theorem}[Connection between the original R-PCA and the relaxed R-LRR]
\label{theorem: relationships about relaxed RSI} For any
minimizer $(A^*,E^*)$ of the original R-PCA problem \eqref{equ:
generalized RPCA}, the relaxed R-LRR problem \eqref{equ:
generalized nuclear norm RSI} has an optimal solution $((A^*)^\dag
A^*,E^*)$. Conversely, suppose that the relaxed R-LRR problem
\eqref{equ: generalized nuclear norm RSI} has a minimizer
$(Z^*,E^*)$, then $(X-E^*,E^*)$ is an optimal solution to the
original R-PCA problem \eqref{equ: generalized RPCA}.
\end{theorem}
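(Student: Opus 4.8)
The plan is to eliminate the variable $Z$ from the relaxed R-LRR problem \eqref{equ: generalized nuclear norm RSI} by minimizing over it first, with the noise matrix $E$ held fixed. Writing $A = X - E$, the bilinear constraint $X-E=(X-E)Z$ becomes the noiseless LRR constraint $A = AZ$, so for each fixed $E$ the inner subproblem is exactly $\min_Z \|Z\|_*$ subject to $A = AZ$. Proposition~\ref{prop:solution to relaxed noiseless LRR} then carries all the weight: its \emph{unique} minimizer is $Z = A^\dag A = V_A V_A^T$, and since $V_A V_A^T$ is an orthogonal projection whose nonzero eigenvalues all equal $1$, its nuclear norm equals $\mbox{rank}(A)=\mbox{rank}(X-E)$. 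Hence, after eliminating $Z$, problem \eqref{equ: generalized nuclear norm RSI} collapses to $\min_E \mbox{rank}(X-E)+\lambda f(E)$, which upon the substitution $A = X-E$ is literally the original R-PCA problem \eqref{equ: generalized RPCA}. This identity of optimal values is the conceptual core of the argument.

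For the forward direction, given a minimizer $(A^*,E^*)$ of \eqref{equ: generalized RPCA}, I would first check that $((A^*)^\dag A^*, E^*)$ is feasible for \eqref{equ: generalized nuclear norm RSI}: since $X-E^*=A^*$ and $A^*(A^*)^\dag A^*=A^*$, the constraint holds. Its objective value is $\|(A^*)^\dag A^*\|_* + \lambda f(E^*) = \mbox{rank}(A^*)+\lambda f(E^*)$, the R-PCA optimum. For optimality, take any feasible $(Z,E)$; the constraint gives $(X-E)=(X-E)Z$, so by the lower bound implicit in Proposition~\ref{prop:solution to relaxed noiseless LRR} we have $\|Z\|_* \geq \mbox{rank}(X-E)$, whence $\|Z\|_*+\lambda f(E)\geq \mbox{rank}(X-E)+\lambda f(E)\geq \mbox{rank}(A^*)+\lambda f(E^*)$, the last step being optimality of $(A^*,E^*)$ for R-PCA. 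Thus the proposed point attains the minimum.

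For the converse, given a minimizer $(Z^*,E^*)$ of \eqref{equ: generalized nuclear norm RSI}, I would argue that $Z^*$ must itself solve the inner problem $\min_Z \|Z\|_*$ subject to $(X-E^*)=(X-E^*)Z$ --- otherwise replacing $Z^*$ by the better inner minimizer while keeping $E^*$ fixed would strictly reduce the joint objective. By the uniqueness clause of Proposition~\ref{prop:solution to relaxed noiseless LRR}, $Z^*=(X-E^*)^\dag(X-E^*)$ and $\|Z^*\|_*=\mbox{rank}(X-E^*)$, so the R-LRR optimum equals $\mbox{rank}(X-E^*)+\lambda f(E^*)$. To see that $(X-E^*,E^*)$ is R-PCA-optimal, I would rule out any strictly better R-PCA point $(A,E)$: from such a point $(A^\dag A, E)$ is feasible for \eqref{equ: generalized nuclear norm RSI} (as $AA^\dag A=A$) with objective $\mbox{rank}(A)+\lambda f(E)$, which would undercut the R-LRR optimum, a contradiction.

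The one step that must be handled carefully is the claim $\|Z\|_*\geq\mbox{rank}(X-E)$ for every feasible $Z$, together with its companion that joint optimality forces $Z^*$ to be the unique inner minimizer. Both rest entirely on Proposition~\ref{prop:solution to relaxed noiseless LRR}, in particular on the identity $\|V_A V_A^T\|_*=\mbox{rank}(A)$ that converts the nuclear-norm value into a rank and thereby links the \emph{relaxed} objective on $Z$ to the \emph{original} (rank) objective on $A$. Everything else is routine feasibility verification and value matching.
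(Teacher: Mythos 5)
Your proposal is correct and takes essentially the same route as the paper: the paper proves this theorem by repeating the contradiction argument of Theorem~\ref{theorem: relationships about original RSI} with Proposition~\ref{prop:solution to relaxed noiseless LRR} in place of Lemma~\ref{lemma: solution to original LRR}, which is exactly your observation that minimizing out $Z$ collapses the relaxed R-LRR objective to $\mbox{rank}(X-E)+\lambda f(E)$ via the identity $\|A^\dag A\|_*=\mbox{rank}(A)$. The only cosmetic difference is that you argue directly through the value function while the paper phrases both directions as proofs by contradiction.
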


\begin{remark}
According to Theorem \ref{theorem: relationships about relaxed
RSI}, the relaxed R-LRR can be viewed as denoising the data
first by the original R-PCA and then adopting the shape
interaction matrix of the denoised data matrix as the affinity
matrix. Such a procedure is exactly the same as that in \citep{Wei}
which was proposed out of heuristics and for which there was no
proof provided.
\end{remark}

\begin{theorem}[Connection between the original R-PCA and the original R-LatLRR]
\label{theorem: relationships about original Robust LatLRR} Let
the pair $(A^*,E^*)$ be any optimal solution to the original R-PCA
problem \eqref{equ: generalized RPCA}. Then the original R-LatLRR
model \eqref{equ: generalized rank Robust LatLRR} has minimizers
$(Z^*,L^*,E^*)$, where
\begin{equation}
Z^*=V_{A^*}\widetilde{W}V_{A^*}^T+S_1\widetilde{W}V_{A^*}^T,\ \ L^*=U_{A^*}\Sigma_{A^*}(I-\widetilde{W})\Sigma_{A^*}^{-1}U_{A^*}^T+U_{A^*}\Sigma_{A^*}(I-\widetilde{W})S_2,
\end{equation}
$\widetilde{W}$ is any idempotent matrix and $S_1$ and $S_2$ are
any matrices satisfying:
\begin{enumerate}
\item $V_A^TS_1=0$ and $S_2U_A=0$; and \item
rank$(S_1)\leq$rank$(\widetilde{W})$ and
rank$(S_2)\leq$rank$(I-\widetilde{W})$.
\end{enumerate}
Conversely, let $(Z^*,L^*,E^*)$ be any optimal solution to the
original R-LatLRR \eqref{equ: generalized rank Robust LatLRR}.
Then $(X-E^*,E^*)$ is a minimizers of the original R-PCA problem
\eqref{equ: generalized RPCA}.
\end{theorem}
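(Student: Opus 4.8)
The plan is to prove both directions through a single reduction: minimizing out the pair $(Z,L)$ in the original R-LatLRR \eqref{equ: generalized rank Robust LatLRR} collapses the model to the original R-PCA \eqref{equ: generalized RPCA}. The cornerstone is that, for a fixed noise term $E$ and writing $A=X-E$, the constraint becomes $A=AZ+LA$, which is exactly the constraint of the noiseless R-LatLRR \eqref{equ:noiseless rank LatLRR model}. First I would prove the rank identity $\min_{Z,L}\{\mbox{rank}(Z)+\mbox{rank}(L):A=AZ+LA\}=\mbox{rank}(A)$. The lower bound is immediate from subadditivity of rank, since $\mbox{rank}(A)=\mbox{rank}(AZ+LA)\le\mbox{rank}(AZ)+\mbox{rank}(LA)\le\mbox{rank}(Z)+\mbox{rank}(L)$; the upper bound is attained by the feasible pair $(A^\dag A,\boldsymbol{0})$, because $A(A^\dag A)=A$ and $\mbox{rank}(A^\dag A)=\mbox{rank}(A)$. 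As $f(E)$ depends only on $E$ while the constraint for fixed $E$ restricts only $(Z,L)$, I can interchange the minimizations to obtain $\min_{Z,L,E}[\mbox{rank}(Z)+\mbox{rank}(L)+\lambda f(E)]=\min_{E}[\mbox{rank}(X-E)+\lambda f(E)]$, which is precisely the optimal value of \eqref{equ: generalized RPCA}. Hence the two models share the same optimal value and the same optimal noise terms.

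For the forward direction, given an R-PCA minimizer $(A^*,E^*)$ I would invoke the complete closed-form solution set of the noiseless R-LatLRR \eqref{equ:noiseless rank LatLRR model} established in \cite{Zhang:Counterexample}, instantiated at $A=A^*$; this produces exactly the stated families for $Z^*$ and $L^*$ in terms of the skinny SVD $U_{A^*}\Sigma_{A^*}V_{A^*}^T$, the idempotent $\widetilde{W}$, and $S_1,S_2$. It then remains to verify two facts. Feasibility: a direct computation using $V_{A^*}^TS_1=\boldsymbol{0}$ and $S_2U_{A^*}=\boldsymbol{0}$ gives $A^*Z^*=U_{A^*}\Sigma_{A^*}\widetilde{W}V_{A^*}^T$ and $L^*A^*=U_{A^*}\Sigma_{A^*}(I-\widetilde{W})V_{A^*}^T$, whose sum telescopes to $U_{A^*}\Sigma_{A^*}V_{A^*}^T=A^*$. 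Optimality: since $Z^*$ and $L^*$ factor through $\widetilde{W}$ and $I-\widetilde{W}$ respectively, we have $\mbox{rank}(Z^*)\le\mbox{rank}(\widetilde{W})$ and $\mbox{rank}(L^*)\le\mbox{rank}(I-\widetilde{W})$, while idempotency of $\widetilde{W}\in\mathbb{R}^{r\times r}$ with $r=\mbox{rank}(A^*)$ yields $\mbox{rank}(\widetilde{W})+\mbox{rank}(I-\widetilde{W})=r$; combined with the universal lower bound above, this forces $\mbox{rank}(Z^*)+\mbox{rank}(L^*)=\mbox{rank}(A^*)$. Thus the objective equals $\mbox{rank}(A^*)+\lambda f(E^*)$, the common optimal value, so $(Z^*,L^*,E^*)$ is an R-LatLRR minimizer.

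For the converse, let $(Z^*,L^*,E^*)$ be an R-LatLRR minimizer and set $A^*=X-E^*$. The pair $(A^*,E^*)$ is feasible for \eqref{equ: generalized RPCA}, and the lower bound gives $\mbox{rank}(A^*)\le\mbox{rank}(Z^*)+\mbox{rank}(L^*)$, so $\mbox{rank}(A^*)+\lambda f(E^*)\le\mbox{rank}(Z^*)+\mbox{rank}(L^*)+\lambda f(E^*)$, which equals the R-LatLRR optimum and hence, by the first paragraph, the R-PCA optimum. Feasibility of $(A^*,E^*)$ then forces equality, so $(X-E^*,E^*)$ is an R-PCA minimizer.

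I expect the main obstacle to be the forward direction's reliance on the complete solution set of the noiseless R-LatLRR: citing precisely the parametrization by an idempotent $\widetilde{W}$ together with the rank side-conditions on $S_1,S_2$ is the delicate linear-algebra ingredient, and one must confirm that the stated families are genuinely feasible and attain the minimal total rank rather than a strictly smaller value (the latter being ruled out here by the subadditivity lower bound, which pins the total rank at exactly $\mbox{rank}(A^*)$). Everything else is a clean reduction driven by the rank identity.
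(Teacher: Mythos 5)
Your proposal is correct, and at its core it follows the same route as the paper: both arguments hinge on the fact that, for fixed $E$ and $A=X-E$, the inner minimization $\min_{Z,L}\{\mathrm{rank}(Z)+\mathrm{rank}(L)\ :\ A=AZ+LA\}$ has optimal value $\mathrm{rank}(A)$, which collapses the original R-LatLRR to the original R-PCA, and both use the closed-form solution family of the noiseless problem from \cite{Zhang:Counterexample} (Lemma \ref{lemma: solution to original LatLRR}) to produce the explicit minimizers in the forward direction. The paper's proof is stated as a contradiction argument mirroring Theorem \ref{theorem: relationships about original RSI}, reading the optimal value off from the cited lemma; you instead derive that value elementarily --- the lower bound from rank subadditivity, $\mathrm{rank}(A)\le\mathrm{rank}(AZ)+\mathrm{rank}(LA)\le\mathrm{rank}(Z)+\mathrm{rank}(L)$, and the upper bound from the feasible pair $(A^\dag A,\boldsymbol{0})$ --- and then match optimal values directly via interchange of minimizations. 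This buys two things: the value identity no longer depends on the completeness of the solution set in Lemma \ref{lemma: solution to original LatLRR} (you only use that lemma as a source of candidates, and you verify their feasibility and optimality by hand, with $\mathrm{rank}(\widetilde{W})+\mathrm{rank}(I-\widetilde{W})=r$ for idempotent $\widetilde{W}$ pinning the total rank at exactly $\mathrm{rank}(A^*)$), and the converse direction becomes a one-line inequality rather than a contradiction. These are presentational refinements rather than a different method; the logical content coincides with the paper's.
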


\begin{theorem}[Connection between the original R-PCA and the relaxed R-LatLRR]
\label{theorem: relationships about relaxed Robust LatLRR} Let
the pair $(A^*,E^*)$ be any optimal solution to the original R-PCA
problem \eqref{equ: generalized RPCA}. Then the relaxed R-LatLRR
model \eqref{equ: generalized nuclear norm Robust LatLRR} has
minimizers $(Z^*,L^*,E^*)$, where
\begin{equation}
\label{equ: solutions to noiseless nuclear norm LatLRR}
Z^*=V_{A^*}\widehat{W}V_{A^*}^T,\ \ L^*=U_{A^*}(I-\widehat{W})U_{A^*}^T,
\end{equation}
and $\widehat{W}$ is any block diagonal matrix satisfying:
\begin{enumerate}
\item its blocks are compatible with $\Sigma_{A^*}$, i.e., if
$[\Sigma_{A^*}]_{ii}\not=[\Sigma_{A^*}]_{jj}$ then
$[\widehat{W}]_{ij}$=0; and \item both $\widehat{W}$ and
$I-\widehat{W}$ are positive semi-definite.
\end{enumerate}
Conversely, let $(Z^*,L^*,E^*)$ be any optimal solution to the
relaxed R-LatLRR \eqref{equ: generalized nuclear norm Robust
LatLRR}. Then $(X-E^*,E^*)$ is a minimizer of the original R-PCA
problem \eqref{equ: generalized RPCA}.
\end{theorem}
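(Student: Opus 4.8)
The plan is to reduce the relaxed R-LatLRR problem \eqref{equ: generalized nuclear norm Robust LatLRR} to the original R-PCA problem \eqref{equ: generalized RPCA} by first minimizing over $(Z,L)$ with $E$ held fixed, exactly as Theorem~\ref{theorem: relationships about relaxed RSI} handled R-LRR via Proposition~\ref{prop:solution to relaxed noiseless LRR}. Writing $A=X-E$, the coupling constraint involves only $Z,L$ and the term $\lambda f(E)$ is constant, so the inner problem is precisely the relaxed noiseless R-LatLRR \eqref{equ:noiseless nuclear norm LatLRR model}, and the whole objective becomes $\min_E [\,\min_{Z,L:\,A=AZ+LA}\|Z\|_*+\|L\|_*\,]+\lambda f(E)$. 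The crux is therefore to show that the inner minimum equals $\mbox{rank}(A)$ and to exhibit minimizers of the stated form; once this is done, the outer problem is literally $\min_E \mbox{rank}(X-E)+\lambda f(E)$, i.e.\ the original R-PCA, and both directions follow by matching optimal values.

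For the lower bound I would compress the constraint to the $r\times r$ core, where $r=\mbox{rank}(A)$ and $A=U_A\Sigma_A V_A^T$ is the skinny SVD. Multiplying $A=AZ+LA$ on the left by $U_A^T$ and on the right by $V_A$ gives $\Sigma_A=\Sigma_A\widetilde Z+\widetilde L\,\Sigma_A$ with $\widetilde Z=V_A^TZV_A$ and $\widetilde L=U_A^TLU_A$. Since $U_A,V_A$ are isometries, $\|Z\|_*\ge\|\widetilde Z\|_*$ and $\|L\|_*\ge\|\widetilde L\|_*$. Left-multiplying the core identity by $\Sigma_A^{-1}$ and taking traces (using cyclicity to cancel $\Sigma_A^{-1}\cdots\Sigma_A$) yields $\mbox{tr}(\widetilde Z)+\mbox{tr}(\widetilde L)=r$; combined with the elementary inequality $\|M\|_*\ge\mbox{tr}(M)$ for square $M$, this gives $\|Z\|_*+\|L\|_*\ge\|\widetilde Z\|_*+\|\widetilde L\|_*\ge r$, so that $\min_{Z,L}\|Z\|_*+\|L\|_*=\mbox{rank}(A)$.

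To see that the closed form \eqref{equ: solutions to noiseless nuclear norm LatLRR} attains this bound with $A=A^*$, I would first check feasibility: the block-diagonal compatibility of $\widehat W$ with $\Sigma_{A^*}$ is exactly the commutation relation $\Sigma_{A^*}\widehat W=\widehat W\Sigma_{A^*}$, so $A^*Z^*+L^*A^*=U_{A^*}[\Sigma_{A^*}\widehat W+(I-\widehat W)\Sigma_{A^*}]V_{A^*}^T=U_{A^*}\Sigma_{A^*}V_{A^*}^T=A^*$, i.e.\ the constraint of \eqref{equ: generalized nuclear norm Robust LatLRR} holds for $E=E^*$. Then, because $V_{A^*}$ and $U_{A^*}$ have orthonormal columns, $\|Z^*\|_*=\|\widehat W\|_*$ and $\|L^*\|_*=\|I-\widehat W\|_*$; the positive semi-definiteness of $\widehat W$ and $I-\widehat W$ turns these nuclear norms into traces, whence $\|Z^*\|_*+\|L^*\|_*=\mbox{tr}(\widehat W)+\mbox{tr}(I-\widehat W)=r=\mbox{rank}(A^*)$, matching the lower bound.

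Finally I would assemble both directions from the value identity. Forward: any R-PCA optimizer $(A^*,E^*)$ produces a feasible $(Z^*,L^*,E^*)$ of objective value $\mbox{rank}(A^*)+\lambda f(E^*)$, which equals the R-PCA optimum; since every feasible point of \eqref{equ: generalized nuclear norm Robust LatLRR} has objective at least $\mbox{rank}(X-E)+\lambda f(E)$, itself at least the R-PCA optimum, the two optimal values coincide and $(Z^*,L^*,E^*)$ is a minimizer. Converse: any relaxed-R-LatLRR minimizer $(Z^*,L^*,E^*)$ satisfies $\|Z^*\|_*+\|L^*\|_*\ge\mbox{rank}(X-E^*)$ by the lower bound, so $\mbox{rank}(X-E^*)+\lambda f(E^*)$ is sandwiched between the equal optimal values, forcing $(X-E^*,E^*)$ to be optimal for the original R-PCA. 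The main obstacle is the lower bound of the second paragraph; in particular, recognizing that the block-diagonal compatibility condition is precisely the commutation $\Sigma_{A^*}\widehat W=\widehat W\Sigma_{A^*}$ that simultaneously makes the construction feasible and lets the trace identity close the gap.
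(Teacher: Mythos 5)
Your proposal is correct, but it is organized differently from the paper's argument and is in one respect more self-contained. The paper proves this theorem by simply invoking Lemma~\ref{lemma: solution to relaxed LatLRR} (quoted from prior work), which supplies both the \emph{complete} solution set of the noiseless relaxed R-LatLRR problem \eqref{equ:noiseless nuclear norm LatLRR model} and, implicitly, the fact that its optimal value is $\mathrm{rank}(A)$; the two directions are then handled by the same contradiction template as in Theorem~\ref{theorem: relationships about original RSI}. You instead prove the only two facts actually needed from scratch: the lower bound $\|Z\|_*+\|L\|_*\ge\mathrm{rank}(A)$ over the constraint $A=AZ+LA$ (via compression to the $r\times r$ core, the trace identity $\mathrm{tr}(\widetilde Z)+\mathrm{tr}(\widetilde L)=r$, and the duality inequality $\mathrm{tr}(M)\le\|M\|_*$), and the attainment of this bound by the stated $(Z^*,L^*)$, for which your observation that block-compatibility of $\widehat W$ with $\Sigma_{A^*}$ is exactly the commutation $\Sigma_{A^*}\widehat W=\widehat W\Sigma_{A^*}$ is the right key. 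You then close both directions by matching optimal values rather than by contradiction; the two phrasings are equivalent. What your route buys is independence from the external lemma (you do not need the full characterization of the noiseless solution set, only the optimal value and one family of minimizers, which is all the theorem statement requires); what the paper's route buys is brevity and uniformity with the other three theorems. Each step of your argument checks out: the isometry inequality $\|Z\|_*\ge\|V_A^TZV_A\|_*$, the cyclic cancellation in $\mathrm{tr}(\Sigma_A^{-1}\widetilde L\Sigma_A)=\mathrm{tr}(\widetilde L)$, the feasibility computation $U_{A^*}[\Sigma_{A^*}\widehat W+(I-\widehat W)\Sigma_{A^*}]V_{A^*}^T=A^*$, and the conversion of nuclear norms to traces under positive semi-definiteness are all valid.
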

Figure \ref{fig: Relationships} illustrates our theorems by
putting the original R-PCA at the center of the low rank subspace
clustering models under consideration.

\begin{figure}
\centering
\includegraphics[width=0.8\textwidth]{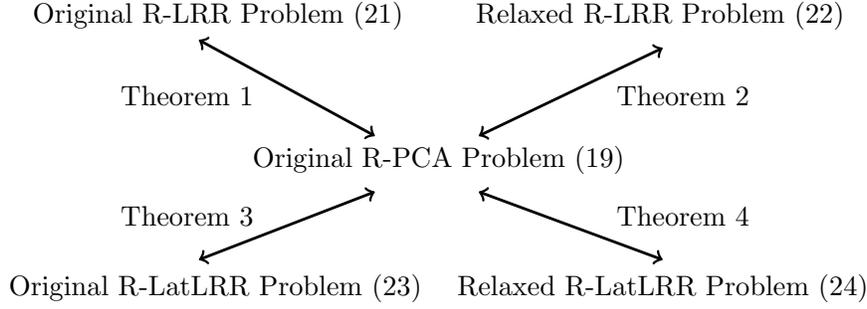}
\caption{Visualization of the relationship among problem
\eqref{equ: generalized rank RSI}, \eqref{equ: generalized nuclear
norm RSI}, \eqref{equ: generalized rank Robust LatLRR},
\eqref{equ: generalized nuclear norm Robust LatLRR}, and
\eqref{equ: generalized RPCA}, where an arrow means that a
solution to one problem could be used to express a solution (or
solutions) to the other problem in a closed form.} \label{fig:
Relationships}
\end{figure}

\comment{To better establish the connections, the following
corollary shows that \eqref{equ: generalized rank RSI},
\eqref{equ: generalized nuclear norm RSI}, \eqref{equ: generalized
rank Robust LatLRR} and \eqref{equ: generalized nuclear norm
Robust LatLRR} are also inter-expressed, where robust PCA serves
as a middle procedure for bridging the connections between various
models. }

By the above theorems, we can easily have the following corollary.
\begin{corollary}
\label{corollary: complete connections} The solutions to the
original R-PCA \eqref{equ: generalized RPCA}, original R-LRR
\eqref{equ: generalized rank RSI}, relaxed R-LRR \eqref{equ:
generalized nuclear norm RSI}, original R-LatLRR \eqref{equ:
generalized rank Robust LatLRR}, and relaxed R-LatLRR
\eqref{equ: generalized nuclear norm Robust LatLRR} are all
mutually expressible.
\end{corollary}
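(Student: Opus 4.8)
The plan is to treat the original R-PCA problem \eqref{equ: generalized RPCA} as a hub and to obtain the corollary purely by composing the four bidirectional correspondences already established in Theorems \ref{theorem: relationships about original RSI}--\ref{theorem: relationships about relaxed Robust LatLRR}. Each of these theorems supplies both a forward direction (from any R-PCA minimizer $(A^*,E^*)$ one writes down, in closed form via the skinny SVD $U_{A^*}\Sigma_{A^*}V_{A^*}^T$ of $A^*$, a minimizer of the companion model) and a converse direction (from any minimizer of the companion model with noise component $E^*$, the pair $(X-E^*,E^*)$ is a minimizer of R-PCA). Thus each theorem already certifies mutual expressibility between the original R-PCA and one of the other four models, and the only remaining task is to chain these pairwise links.

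First I would dispose of the pairs in which one member is the original R-PCA: for such a pair the assertion is literally one of the four theorems, so nothing further is required. Next I would treat an arbitrary pair $(M_i,M_j)$ of distinct models drawn from the remaining four (original R-LRR, relaxed R-LRR, original R-LatLRR, relaxed R-LatLRR). Given a solution to $M_i$ with noise component $E^*$, I would first apply the converse direction of whichever of Theorems \ref{theorem: relationships about original RSI}--\ref{theorem: relationships about relaxed Robust LatLRR} links $M_i$ to R-PCA, producing the explicit R-PCA minimizer $(X-E^*,E^*)$. I would then set $A^*:=X-E^*$ and feed it into the forward direction of the theorem linking R-PCA to $M_j$, whose closed-form expressions depend only on the skinny SVD of $A^*$ together with free parameters such as $S$, $\widetilde{W}$, or $\widehat{W}$. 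The composition yields a solution to $M_j$ written entirely in closed form in terms of the given solution to $M_i$, and by symmetry the same construction runs in reverse, giving mutual expressibility of $M_i$ and $M_j$.

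The structural fact that makes this composition legitimate is that every converse direction returns an R-PCA minimizer carrying exactly the same noise matrix $E^*$ as the model solution we started from. The noise component is therefore invariant along the whole chain and acts as the glue joining the two theorems, while the representation matrices $Z$ and $L$ are simply re-derived from the SVD of $X-E^*$. This also clarifies why going through R-PCA loses nothing: the forward directions accept an arbitrary R-PCA minimizer $(A^*,E^*)$, and the converse directions output precisely an object of that form.

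The step I expect to require the most care is confirming that the closed-form property is genuinely preserved under the composition, i.e. that the output of a converse direction is a bona fide R-PCA minimizer whose skinny SVD is exactly what the subsequent forward direction consumes, so that the relation ``mutually expressible'' indeed composes through the hub rather than merely relating each model to R-PCA in isolation. This is bookkeeping rather than a deep obstacle; once it is checked, the corollary is immediate, since every unordered pair among the five models either contains the original R-PCA (covered by a single theorem) or is bridged by one converse application followed by one forward application.
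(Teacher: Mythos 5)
Your proposal is correct and follows essentially the same route as the paper: both use the original R-PCA as a hub, handle pairs containing R-PCA directly via the corresponding theorem, and bridge any other pair by applying the converse direction of one theorem to obtain the R-PCA minimizer $(X-E^*,E^*)$ and then the forward direction of the other, exactly as the paper illustrates with the original and relaxed R-LRR. No gaps.
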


\begin{remark}
\label{remark: better solution}
According to the above results, once we obtain a globally optimal solution to the original R-PCA \eqref{equ: generalized RPCA}, we can obtain globally optimal solutions to the original and relaxed R-LRR and R-LatLRR problems. Although in general solving the original R-PCA \eqref{equ: generalized RPCA} is NP hard, under certain condition (see Section \ref{subsection: Better Solution for Subspace Recovery}) its globally optimal solution can be obtained with an overwhelming probability by solving the relaxed R-PCA \eqref{equ: generalized PCP}. If one solves the original and relaxed R-LRR or R-LatLRR directly, e.g., by ADM, there is no analysis on whether their globally optimal solutions can be attained, due to their non-convex nature. In this sense, we say that we can obtain a better solution for the original and relaxed R-LRR and R-LatLRR if we reduce them to the original R-PCA. Our numerical experiments in Section \ref{subsection:comparison of optimality on synthetic data} testify to our claims.
\end{remark}

\section{Proofs of Main Results}
\label{section: proofs of the main results} In this section, we
provide detailed proofs of the four theorems in the previous
section.

\subsection{Connection between R-PCA and R-LRR}
The following lemma is useful throughout the proof of Theorem
\ref{theorem: relationships about original RSI}.

\begin{lemma}[\cite{Zhang:Counterexample}]
\label{lemma: solution to original LRR} Suppose $U_A\Sigma_AV_A^T$
is the skinny SVD of $A$. Then the complete solutions to
\eqref{equ:original noiseless LRR} are $Z^*=A^\dag A+SV_A^T$,
where $S$ is any matrix such that $V_A^TS=0$.
\end{lemma}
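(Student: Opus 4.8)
The plan is to reduce the bilinear-looking constraint $A = AZ$ to a simple linear constraint on $Z$, to show that the minimum rank equals $r := \mathrm{rank}(A)$, and then to characterize exactly which feasible matrices attain this minimum. First I would rewrite the constraint using the skinny SVD $A = U_A\Sigma_A V_A^T$, where $U_A \in \mathbb{R}^{m\times r}$, $\Sigma_A \in \mathbb{R}^{r\times r}$ is invertible, and $V_A \in \mathbb{R}^{n\times r}$ has orthonormal columns. Left-multiplying $A = AZ$ by $\Sigma_A^{-1}U_A^T$ gives the equivalent constraint $V_A^T Z = V_A^T$, while the reverse implication follows by left-multiplying by $U_A\Sigma_A$. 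Hence the feasible set is precisely $\{Z : V_A^T Z = V_A^T\}$. Recalling the identity $A^\dagger A = V_A V_A^T$, one checks at once that any candidate $Z^* = A^\dagger A + SV_A^T = (V_A + S)V_A^T$ with $V_A^T S = 0$ is feasible, since $V_A^T Z^* = V_A^T V_A V_A^T + V_A^T S V_A^T = V_A^T$.

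Next I would establish the rank lower bound together with the optimality of these candidates. For \emph{any} feasible $Z$ we have $\mathrm{rank}(Z) \ge \mathrm{rank}(V_A^T Z) = \mathrm{rank}(V_A^T) = r$, so the optimal value is at least $r$. For the candidate $Z^* = (V_A + S)V_A^T$ with $V_A^T S = 0$, the column spaces of $V_A$ and $S$ are orthogonal, so $(V_A + S)x = 0$ forces $V_A x = 0$ and hence $x = 0$; thus $V_A + S$ has full column rank $r$, and therefore $\mathrm{rank}(Z^*) = r$. This shows every such $Z^*$ is an optimal solution and that the optimal value is exactly $r$.

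The heart of the argument — and the step I expect to be the main obstacle — is proving \emph{completeness}, i.e. that every minimizer has the stated form. Here I would introduce $V_A^\perp \in \mathbb{R}^{n\times(n-r)}$ whose columns complete $V_A$ to an orthonormal basis, so that $[V_A, V_A^\perp]$ is orthogonal and $V_A V_A^T + V_A^\perp (V_A^\perp)^T = I$. For feasible $Z$ this yields the decomposition $Z = V_A(V_A^T Z) + V_A^\perp((V_A^\perp)^T Z) = V_A V_A^T + V_A^\perp B$, where $B := (V_A^\perp)^T Z$. Since multiplication by the invertible matrix $[V_A, V_A^\perp]$ preserves rank, $\mathrm{rank}(Z) = \mathrm{rank}\!\begin{pmatrix} V_A^T \\ B \end{pmatrix}$. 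This stacked matrix already contains the $r$ independent rows of $V_A^T$, so its rank equals $r$ if and only if every row of $B$ lies in the row space of $V_A^T$, i.e. $B = C V_A^T$ for some matrix $C$.

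Finally, setting $S := V_A^\perp C$ converts the characterization back into the desired form: $Z = V_A V_A^T + V_A^\perp C V_A^T = A^\dagger A + S V_A^T$, and $V_A^T S = V_A^T V_A^\perp C = 0$. Conversely, any admissible $S$ (one with $V_A^T S = 0$) has its column space inside that of $V_A^\perp$, so it arises as $S = V_A^\perp C$ with $C = (V_A^\perp)^T S$. Combining the two directions pins down the complete solution set as $\{A^\dagger A + S V_A^T : V_A^T S = 0\}$, which closes the proof. The only delicate point throughout is the rank bookkeeping in the completeness step; the rest is routine linear algebra built on the rank-preserving orthogonal change of basis.
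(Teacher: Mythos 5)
Your argument is correct and complete: the reduction of $A=AZ$ to the linear constraint $V_A^TZ=V_A^T$, the lower bound $\mathrm{rank}(Z)\ge \mathrm{rank}(V_A^TZ)=r$, and the completeness step via the orthogonal change of basis $[V_A,\,V_A^{\perp}]$ (which forces $(V_A^{\perp})^TZ=CV_A^T$ and hence $Z=A^{\dag}A+SV_A^T$ with $S=V_A^{\perp}C$, $V_A^TS=0$) all hold, and the converse inclusion is verified. Note, however, that the paper itself does not prove this lemma --- it is imported without proof from the cited prior work \citep{Zhang:Counterexample} --- so there is no in-paper argument to compare against; your write-up is a valid, self-contained proof of the quoted result, following what is essentially the standard route for characterizing minimum-rank solutions of $A=AZ$.
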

Using Lemma \ref{lemma: solution to original LRR}, we can prove
Theorem \ref{theorem: relationships about original RSI}.

\begin{proof}\textbf{(Theorem \ref{theorem: relationships about original RSI})}
We first prove the first part of the theorem. Since $(A^*,E^*)$ is
a feasible solution to problem \eqref{equ: generalized RPCA}, it
is easy to check that $((A^*)^\dag A^*+SV_{A^*}^T,E^*)$ is also
feasible to \eqref{equ: generalized rank RSI} by using a
fundamental property of Moore-Penrose pseudo-inverse: $YY^\dag
Y=Y$. Now suppose that $((A^*)^\dag A^*+SV_{A^*}^T,E^*)$ is not an
optimal solution to \eqref{equ: generalized rank RSI}. Then there
exists an optimal solution to \eqref{equ: generalized rank RSI}, denoted by
$(\widetilde{Z},\widetilde{E})$, such that
\begin{equation}
\label{equ: 1.1}
\mbox{rank}(\widetilde{Z})+\lambda
f(\widetilde{E})<\mbox{rank}((A^*)^\dag A^*+SV_{A^*}^T)+\lambda
f(E^*)
\end{equation}
and meanwhile $(\widetilde{Z},\widetilde{E})$ is feasible: $X-\widetilde{E}=(X-\widetilde{E})\widetilde{Z}$.
Since $(\widetilde{Z},\widetilde{E})$ is optimal to problem \eqref{equ: generalized rank RSI}, by Lemma \ref{lemma: solution to original LRR}, we fix $\widetilde{E}$ and have
\begin{equation}
\label{equ: 1.2}
\begin{split}
\mbox{rank}(\widetilde{Z})+\lambda f(\widetilde{E})&=\mbox{rank}((X-\widetilde{E})^\dag(X-\widetilde{E}))+\lambda f(\widetilde{E})\\
&=\mbox{rank}(X-\widetilde{E})+\lambda f(\widetilde{E}).
\end{split}
\end{equation}
On the other hand,
\begin{equation}
\label{equ: 1.3}
\mbox{rank}((A^*)^\dag A^*+SV_{A^*}^T)+\lambda f(E^*)=\mbox{rank}(A^*)+\lambda f(E^*).
\end{equation}
From \eqref{equ: 1.1}, \eqref{equ: 1.2}, and \eqref{equ: 1.3}, we have
\begin{equation}
\mbox{rank}(X-\widetilde{E})+\lambda f(\widetilde{E})<\mbox{rank}(A^*)+\lambda f(E^*),
\end{equation}
which leads to a contradiction with the optimality
of $(A^*,E^*)$ to R-PCA~\eqref{equ: generalized RPCA}.

We then prove the converse, also by contradiction. Suppose that
$(Z^*,E^*)$ is a minimizer to the original R-LRR problem
\eqref{equ: generalized rank RSI}, while $(X-E^*,E^*)$ is not a
minimizer to the R-PCA problem \eqref{equ: generalized RPCA}. Then
there will be a better solution to problem \eqref{equ: generalized RPCA}, termed
$(\widetilde{A},\widetilde{E})$, which satisfies
\begin{equation}
\label{equ: RSI and RPCA middle}
\mbox{rank}(\widetilde{A})+\lambda
f(\widetilde{E})<\mbox{rank}(X-E^*)+\lambda f(E^*).
\end{equation}
Fixing $E$ as $E^*$ in
\eqref{equ: generalized rank RSI}, by
Lemma \ref{lemma: solution to original LRR} and the
optimality of $Z^*$, we infer that
\begin{equation}
\label{equ: RSI and RPCA right}
\begin{split}
\mbox{rank}(X-E^*)+\lambda f(E^*)&=\mbox{rank}((X-E^*)^\dag(X-E^*))+\lambda f(E^*)\\
&=\mbox{rank}(Z^*)+\lambda f(E^*).
\end{split}
\end{equation}
On the other hand,
\begin{equation}
\label{equ: RSI and RPCA left}
\mbox{rank}(\widetilde{A})+\lambda f(\widetilde{E})=\mbox{rank}(\widetilde{A}^\dag\widetilde{A})+\lambda f(\widetilde{E}),
\end{equation}
where we have utilized another property of the Moore-Penrose
pseudo-inverse: rank($Y^\dag Y$)=rank(Y). Combining \eqref{equ:
RSI and RPCA middle}, \eqref{equ: RSI and RPCA right}, and
\eqref{equ: RSI and RPCA left}, we have
\begin{equation}\label{equ: RSI and RPCA final}
\mbox{rank}(\widetilde{A}^\dag\widetilde{A})+\lambda
f(\widetilde{E})<\mbox{rank}(Z^*)+\lambda f(E^*).
\end{equation}
Notice that $(\widetilde{A}^\dag\widetilde{A},\widetilde{E})$ satisfies
the constraint of the original R-LRR problem \eqref{equ:
generalized rank RSI} due to $\widetilde{A}+\widetilde{E}=X$ and
$\widetilde{A}(\widetilde{A}^\dag\widetilde{A})=\widetilde{A}$.
The inequality \eqref{equ: RSI and RPCA final} leads to a contradiction with the
optimality of the pair $(Z^*,E^*)$ for R-LRR.

Thus we finish the proof.
\end{proof}

Now we prove Theorem \ref{theorem: relationships about relaxed
RSI}. Proposition \ref{prop:solution to relaxed noiseless LRR} is critical for
the proof.

\comment{
\begin{lemma}
\label{lemma:nuclear norm and rank} For any matrix $Y$, $||Y^\dag
Y||_*=\mbox{rank}(Y)$.
\end{lemma}
\begin{proof}
This can be easily verified by the skinny SVD of $Y$.
\end{proof}
}

\comment{
\begin{lemma}[\cite{LiuG1}]
\label{lemma: solution to relaxed LRR}
The minimizer to the relaxed LRR problem \eqref{equ:relaxed noiseless LRR} is unique and is equal to $Z^*=A^\dag A$.
\end{lemma}
}

\begin{proof}\textbf{(Theorem \ref{theorem: relationships about relaxed RSI})}
The proof is similar to that of Theorem \ref{theorem: relationships about original RSI}. The only difference is that we need to use Proposition \ref{prop:solution to relaxed noiseless LRR} rather than Lemma \ref{lemma: solution to original LRR}.
\comment{
We first prove the first part of the theorem. Obviously, according
to the conditions of the theorem, $((A^*)^\dag A^*,E^*)$ is a
feasible solution to problem \eqref{equ: generalized nuclear norm
RSI}. Now suppose it is not optimal, and the optimal solution to problem \eqref{equ: generalized nuclear norm
RSI} is $(\widetilde{Z},\widetilde{E})$. So we have
\begin{equation}
||\widetilde{Z}||_*+\lambda f(\widetilde{E})<||(A^*)^\dag A^*||_*+\lambda f(E^*).
\end{equation}
Viewing the noise $E$ as a fixed matrix, by Proposition
\ref{prop:solution to relaxed noiseless LRR} we have
\begin{equation}
||\widetilde{Z}||_*+\lambda f(\widetilde{E})=||(X-\widetilde{E})^\dag(X-\widetilde{E})||_*+\lambda f(\widetilde{E})=\mbox{rank}(X-\widetilde{E})+\lambda f(\widetilde{E}).
\end{equation}
On the other hand, $||(A^*)^\dag A^*||_*+\lambda f(E^*)=\mbox{rank}(A^*)+\lambda f(E^*)$ by Lemma \ref{lemma:nuclear norm and rank}.
So we derive
\begin{equation}
\mbox{rank}(X-\widetilde{E})+\lambda f(\widetilde{E})<\mbox{rank}(A^*)+\lambda f(E^*).
\end{equation}
This is a contradiction because $(A^*,E^*)$ has been an optimal solution to the
R-PCA problem \eqref{equ: generalized RPCA}, thus proving the
first part of the theorem.

Next, we prove the second part of the theorem. Similarly, suppose $(X-E^*, E^*)$ is not the optimal solution to the R-PCA problem \eqref{equ: generalized RPCA}. Then there exists a pair $(\widetilde{A},\widetilde{E})$ which is better. Namely,
\begin{equation}
\mbox{rank}(\widetilde{A})+\lambda f(\widetilde{E})<\mbox{rank}(X-E^*)+\lambda f(E^*)
\end{equation}
On one hand, $\mbox{rank}(X-E^*)+\lambda f(E^*)=||(X-E^*)^\dag(X-E^*)||_*+\lambda f(E^*)$ by Lemma \ref{lemma:nuclear norm and rank}.
On the other hand, $\mbox{rank}(\widetilde{A})+\lambda f(\widetilde{E})=||\widetilde{A}^\dag\widetilde{A}||_*+\lambda f(\widetilde{E})$.
Notice that the pair $(\widetilde{A}^\dag\widetilde{A}, \widetilde{E})$ is feasible to the relaxed R-LRR \eqref{equ: generalized nuclear norm RSI}. Thus we have a contradiction.
}
\end{proof}

\subsection{Connection between R-PCA and R-LatLRR}
\comment{Paralleled with above discussion, we also build the
relationships between robust PCA problem \eqref{equ: generalized
RPCA} and R-LatLRR\footnote{R-LatLRR~\citep{Zhang:RobustLatLRR} is
a subspace clustering method proceeding by denoising the data
first via robust PCA and then solving LatLRR \eqref{equ:noiseless
nuclear norm LatLRR model} with its closed form solution. In this
paper, without confusion, we use the term robust LatLRR to
represent models \eqref{equ: generalized rank Robust LatLRR} and
\eqref{equ: generalized nuclear norm Robust LatLRR}. Just as our
following theorems show, the method proposed in our former
work~\citep{Zhang:RobustLatLRR} coincides with such two models.}
problem.}

Now we prove the mutual expressibility between the solutions of
R-PCA and R-LatLRR. Our former work~\citep{Zhang:Counterexample}
gives the complete closed-form solutions to noiseless R-LatLRR
problems \eqref{equ:noiseless rank LatLRR model} and
\eqref{equ:noiseless nuclear norm LatLRR model}, which are both
critical to our proofs.
\begin{lemma}[\cite{Zhang:Counterexample}]
\label{lemma: solution to original LatLRR} Suppose
$U_A\Sigma_AV_A^T$ is the skinny SVD of a denoised data matrix
$A$. Then the complete solutions to the original noiseless R-LatLRR problem
\eqref{equ:noiseless rank LatLRR model}
are as follows:
\begin{equation}
\label{equ: solutions to noiseless rank LatLRR}
Z^*=V_A\widetilde{W}V_A^T+S_1\widetilde{W}V_A^T\ \mbox{and}\ L^*=U_A\Sigma_A(I-\widetilde{W})\Sigma_A^{-1}U_A^T+U_A\Sigma_A(I-\widetilde{W})S_2,
\end{equation}
where $\widetilde{W}$ is any idempotent matrix and $S_1$ and $S_2$
are any matrices satisfying:
\begin{enumerate}
\item $V_A^TS_1=0$ and $S_2U_A=0$; and \item
rank$(S_1)\leq$rank$(\widetilde{W})$ and
rank$(S_2)\leq$rank$(I-\widetilde{W})$.
\end{enumerate}
\end{lemma}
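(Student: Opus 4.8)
The plan is to prove the two inclusions separately: first that every pair $(Z^*,L^*)$ of the stated form is an optimal solution, and then that every optimal solution has this form. I would begin the easy direction by pinning down the optimal value. Since the constraint $A=AZ+LA$ forces $\mbox{rank}(A)\le\mbox{rank}(AZ)+\mbox{rank}(LA)\le\mbox{rank}(Z)+\mbox{rank}(L)$, while the feasible pair $(V_AV_A^T,0)$ attains $\mbox{rank}(Z)+\mbox{rank}(L)=\mbox{rank}(A)=:r$, the minimum equals $r$. I would then verify feasibility of the stated $(Z^*,L^*)$ by direct substitution: using $AV_A=U_A\Sigma_A$ and $AS_1=U_A\Sigma_A(V_A^TS_1)=0$ gives $AZ^*=U_A\Sigma_A\widetilde{W}V_A^T$, while $U_A^TA=\Sigma_AV_A^T$ and $S_2A=(S_2U_A)\Sigma_AV_A^T=0$ give $L^*A=U_A\Sigma_A(I-\widetilde{W})V_A^T$, and adding them telescopes through $\widetilde{W}+(I-\widetilde{W})=I$ back to $A$. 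For optimality I would factor $Z^*=(V_A+S_1)\widetilde{W}V_A^T$ and $L^*=U_A\Sigma_A(I-\widetilde{W})(\Sigma_A^{-1}U_A^T+S_2)$; since $V_A^T(V_A+S_1)=I$ and $(\Sigma_A^{-1}U_A^T+S_2)U_A=\Sigma_A^{-1}$, the outer factors have full column/row rank, so $\mbox{rank}(Z^*)=\mbox{rank}(\widetilde{W})$ and $\mbox{rank}(L^*)=\mbox{rank}(I-\widetilde{W})$, which sum to $r$ once $\widetilde{W}$ is known to be idempotent.

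For the converse (completeness), I would take any optimal $(Z,L)$ and set $P=AZ$, $Q=LA$, so $P+Q=A$. Because $\mbox{col}(P)\subseteq\mbox{col}(A)$ and $\mbox{col}(Q)=\mbox{col}(A-P)\subseteq\mbox{col}(A)=\mbox{range}(U_A)$, with the symmetric statements for row spaces inside $\mbox{range}(V_A)$, I can write $P=U_A\widetilde{P}V_A^T$ and $Q=U_A\widetilde{Q}V_A^T$ with $\widetilde{P}+\widetilde{Q}=\Sigma_A$. Optimality together with rank subadditivity ($r\le\mbox{rank}(P)+\mbox{rank}(Q)\le\mbox{rank}(Z)+\mbox{rank}(L)=r$) forces $\mbox{rank}(P)+\mbox{rank}(Q)=r$; setting $\widetilde{W}:=\Sigma_A^{-1}\widetilde{P}$ so that $\widetilde{Q}=\Sigma_A(I-\widetilde{W})$, this reads $\mbox{rank}(\widetilde{W})+\mbox{rank}(I-\widetilde{W})=r$.

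The crux is to show that this rank identity forces $\widetilde{W}^2=\widetilde{W}$. I would argue that $\mbox{col}(\widetilde{W})+\mbox{col}(I-\widetilde{W})\supseteq\mbox{col}(I)=\mathbb{R}^r$, so dimension counting upgrades the rank identity to a direct sum $\mbox{col}(\widetilde{W})\oplus\mbox{col}(I-\widetilde{W})=\mathbb{R}^r$; then for every $x$ the identity $x=\widetilde{W}x+(I-\widetilde{W})x$ is exactly the decomposition along this direct sum, so $\widetilde{W}$ is the oblique projector onto $\mbox{col}(\widetilde{W})$ along $\mbox{col}(I-\widetilde{W})$ and is therefore idempotent. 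This step, turning a rank equation into idempotency, is the main obstacle and I expect it to be the heart of the argument.

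Finally I would recover $Z$ and $L$ themselves. Left-multiplying $AZ=P=U_A\Sigma_A\widetilde{W}V_A^T$ by $\Sigma_A^{-1}U_A^T$ collapses the constraint to the linear equation $V_A^TZ=\widetilde{W}V_A^T$, whose solutions are $V_A\widetilde{W}V_A^T$ plus any $Z_h$ with $V_A^TZ_h=0$. Imposing the optimality condition $\mbox{rank}(Z)=\mbox{rank}(\widetilde{W})$ and using $\mbox{col}(Z_h)\perp\mbox{col}(V_A\widetilde{W}V_A^T)$, I would show $\mbox{row}(Z_h)\subseteq\mbox{row}(V_A\widetilde{W}V_A^T)$, which lets me write $Z_h=S_1\widetilde{W}V_A^T$; projecting $S_1$ onto $\mbox{range}(V_A)^\perp$ and replacing it by $S_1\widetilde{W}$ normalizes it to satisfy $V_A^TS_1=0$ and $\mbox{rank}(S_1)\le\mbox{rank}(\widetilde{W})$ without altering $Z_h$. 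The analysis for $L$ is identical after transposing $A=AZ+LA$ into $A^T=Z^TA^T+A^TL^T$, which swaps the roles of $Z$ and $L^T$ and produces the $S_2$ term. The secondary obstacle here is the rank-bounded normalization of the free matrices $S_1,S_2$, which is what makes the stated family coincide with the full solution set rather than only a subset of it.
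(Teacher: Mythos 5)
The paper itself offers no proof of this lemma: it is imported verbatim from the authors' earlier work \citep{Zhang:Counterexample} and invoked as a black box in the proof of Theorem \ref{theorem: relationships about original Robust LatLRR}, so there is no in-paper argument to measure yours against. Judged on its own, your proof is correct and complete in all essentials. The identification of the optimal value $r=\mbox{rank}(A)$, the feasibility and rank computations for the parametrized family (via the factorizations $Z^*=(V_A+S_1)\widetilde{W}V_A^T$ and $L^*=U_A\Sigma_A(I-\widetilde{W})(\Sigma_A^{-1}U_A^T+S_2)$, whose outer factors have full column/row rank), and the reduction of any optimal pair to $AZ=U_A\Sigma_A\widetilde{W}V_A^T$ and $LA=U_A\Sigma_A(I-\widetilde{W})V_A^T$ are all sound. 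You also correctly isolate the two load-bearing steps: upgrading $\mbox{rank}(\widetilde{W})+\mbox{rank}(I-\widetilde{W})=r$ to idempotency through the direct sum $\mbox{col}(\widetilde{W})\oplus\mbox{col}(I-\widetilde{W})=\mathbb{R}^r$ and the uniqueness of the induced decomposition, and the normalization of the homogeneous parts $Z_h$ and $L_h$. Two details deserve to be written out explicitly when you expand the sketch: the implication that $\mbox{rank}(Z)=\mbox{rank}(\widetilde{W})$ together with $V_A^TZ_h=0$ forces $\mbox{row}(Z_h)\subseteq\mbox{row}(\widetilde{W}V_A^T)$ is cleanest via $Z^TZ=(V_A\widetilde{W}V_A^T)^T(V_A\widetilde{W}V_A^T)+Z_h^TZ_h$, whose rank equals the dimension of the sum of the two row spaces; and on the $L$ side the bound $\mbox{rank}(S_2)\le\mbox{rank}(I-\widetilde{W})$ requires absorbing a left factor $I-\widetilde{W}$ into $S_2$ (legitimate by its idempotency), exactly mirroring your replacement of $S_1$ by $S_1\widetilde{W}$.
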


Now we are ready to prove Theorem
\ref{theorem: relationships about original Robust LatLRR}.

\begin{proof}\textbf{(Theorem \ref{theorem: relationships about original Robust LatLRR})}
The proof is similar to that of Theorem \ref{theorem: relationships about original RSI}. The only difference is that we need to use Lemma \ref{lemma: solution to original LatLRR} rather than Lemma \ref{lemma: solution to original LRR}.
\comment{
We first prove the first part of the theorem. Since \eqref{equ:
solutions to noiseless rank LatLRR} is the minimizer to problem
\eqref{equ:noiseless rank LatLRR model} with $A=A^*$, it naturally
satisfies the constraint: $A^*=A^*Z^*+L^*A^*$. Together with the
fact that $A^*=X-E^*$ based on the assumption of the theorem, we
conclude that $(Z^*,L^*,E^*)$ satisfies the constraint of the
original R-LatLRR \eqref{equ: generalized rank Robust LatLRR}.

Now suppose that there exists a better solution, termed
$(\widetilde{Z},\widetilde{L},\widetilde{E})$, than
$(Z^*,L^*,E^*)$ for \eqref{equ: generalized rank Robust LatLRR},
which satisfies the constraint
$$X-\widetilde{E}=(X-\widetilde{E})\widetilde{Z}+\widetilde{L}(X-\widetilde{E})$$
and has a lower objective function value:
\begin{equation}
\label{equ: original Robust LatLRR middle}
\mbox{rank}(\widetilde{Z})+\mbox{rank}(\widetilde{L})+\lambda f(\widetilde{E})<\mbox{rank}(Z^*)+\mbox{rank}(L^*)+\lambda f(E^*).
\end{equation}
Without loss of generality, we assume that
$(\widetilde{Z},\widetilde{L},\widetilde{E})$ is optimal to
\eqref{equ: generalized rank Robust LatLRR}. Then according to
Lemma \ref{lemma: solution to original LatLRR} by fixing
$\widetilde{E}$ and $E^*$, respectively, we have
\begin{equation}
\label{equ: original Robust LatLRR left}
\mbox{rank}(\widetilde{Z})+\mbox{rank}(\widetilde{L})+\lambda f(\widetilde{E})=\mbox{rank}(X-\widetilde{E})+\lambda f(\widetilde{E}),
\end{equation}
\begin{equation}
\label{equ: original Robust LatLRR right}
\mbox{rank}(Z^*)+\mbox{rank}(L^*)+\lambda f(E^*)=\mbox{rank}(X-E^*)+\lambda f(E^*).
\end{equation}
From \eqref{equ: original Robust LatLRR middle}, \eqref{equ:
original Robust LatLRR left}, and \eqref{equ: original Robust
LatLRR right} we finally get
\begin{equation}
\mbox{rank}(X-\widetilde{E})+\lambda f(\widetilde{E})<\mbox{rank}(X-E^*)+\lambda f(E^*),
\end{equation}
which leads to a contradiction with our assumption that
$(A^*,E^*)$ is optimal for R-PCA.

We then prove the converse. Similarly, suppose that
$(\widetilde{A},\widetilde{E})$ is a better solution than
$(X-E^*,E^*)$ for R-PCA \eqref{equ: generalized RPCA}. Then
\begin{equation}
\begin{split}
\mbox{rank}(\widetilde{A}^\dag\widetilde{A})+\mbox{rank}(\boldsymbol{0})+\lambda f(\widetilde{E})&=\mbox{rank}(\widetilde{A})+\lambda f(\widetilde{E})\\
&<\mbox{rank}(X-E^*)+\lambda f(E^*)\\
&=\mbox{rank}(Z^*)+\mbox{rank}(L^*)+\lambda f(E^*),
\end{split}
\end{equation}
where the last equality holds since $(Z^*,L^*,E^*)$ is optimal to
\eqref{equ: generalized rank Robust LatLRR} and its corresponding
minimum objective function value is $\mbox{rank}(X-E^*)+\lambda
f(E^*)$. Since
$(\widetilde{A}^\dag\widetilde{A},\boldsymbol{0},\widetilde{E})$
is feasible to the original R-LatLRR \eqref{equ: generalized rank
Robust LatLRR}, we get a contradiction with the optimality of
$(Z^*,L^*,E^*)$ for R-LatLRR.
}
\end{proof}

The following lemma is helpful for proving the connection between the
R-PCA \eqref{equ: generalized RPCA} and the relaxed R-LatLRR
\eqref{equ: generalized nuclear norm Robust LatLRR}.

\begin{lemma}[\cite{Zhang:Counterexample}]
\label{lemma: solution to relaxed LatLRR} Suppose
$U_A\Sigma_AV_A^T$ is the skinny SVD of a denoised data matrix
$A$. Then the complete optimal
solutions to the relaxed noiseless R-LatLRR problem
\eqref{equ:noiseless nuclear norm LatLRR model} are as follows:
\begin{equation}
Z^*=V_A\widehat{W}V_A^T\ \mbox{and}\ L^*=U_A(I-\widehat{W})U_A^T,
\end{equation}
where $\widehat{W}$ is any block diagonal matrix satisfying:
\begin{enumerate}
\item its blocks are compatible with $\Sigma_A$, i.e., if
$[\Sigma_A]_{ii}\not=[\Sigma_A]_{jj}$ then $[\widehat{W}]_{ij}$=0;
and \item both $\widehat{W}$ and $I-\widehat{W}$ are positive
semi-definite.
\end{enumerate}
\end{lemma}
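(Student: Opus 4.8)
The plan is to reduce \eqref{equ:noiseless nuclear norm LatLRR model} to an $r\times r$ ``core'' problem singled out by the singular vectors of $A$, where $r=\mbox{rank}(A)$ and $A=U_A\Sigma_AV_A^T$ is the skinny SVD. First I would compress the constraint $A=AZ+LA$ by left-multiplying with $U_A^T$ and right-multiplying with $V_A$. Using $U_A^TU_A=V_A^TV_A=I$, $U_A^TA=\Sigma_AV_A^T$ and $AV_A=U_A\Sigma_A$, the bilinear constraint collapses to the reduced identity
\[
\Sigma_A\widetilde{Z}+\widetilde{L}\Sigma_A=\Sigma_A,\qquad\mbox{where}\quad\widetilde{Z}:=V_A^TZV_A,\ \ \widetilde{L}:=U_A^TLU_A
\]
are $r\times r$ matrices. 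This is the essential algebraic consequence of feasibility.

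Next I would prove that $r$ is a universal lower bound on the objective. Since $U_A$ and $V_A$ have orthonormal columns, multiplying by these contractions cannot increase the nuclear norm, so $\|\widetilde{Z}\|_*\le\|Z\|_*$ and $\|\widetilde{L}\|_*\le\|L\|_*$. Moreover, because the identity has unit spectral norm, $\|M\|_*\ge\langle M,I\rangle=\mbox{tr}(M)$ for every square $M$. Right-multiplying the reduced identity by $\Sigma_A^{-1}$ and taking the trace (using cyclicity) gives $\mbox{tr}(\widetilde{Z})+\mbox{tr}(\widetilde{L})=r$. Chaining these facts,
\[
\|Z\|_*+\|L\|_*\ \ge\ \|\widetilde{Z}\|_*+\|\widetilde{L}\|_*\ \ge\ \mbox{tr}(\widetilde{Z})+\mbox{tr}(\widetilde{L})\ =\ r .
\]
For the forward inclusion, I would check that the displayed candidates $Z^*=V_A\widehat{W}V_A^T$ and $L^*=U_A(I-\widehat{W})U_A^T$ are feasible exactly when the reduced identity holds with $\widetilde{Z}=\widehat{W}$ and $\widetilde{L}=I-\widehat{W}$, i.e. $\Sigma_A\widehat{W}=\widehat{W}\Sigma_A$; for diagonal $\Sigma_A$ this commutation is precisely the block-compatibility condition. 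Since $\widehat{W}$ and $I-\widehat{W}$ are positive semi-definite, their nuclear norms equal their traces, whence $\|Z^*\|_*+\|L^*\|_*=\mbox{tr}(\widehat{W})+\mbox{tr}(I-\widehat{W})=r$, so the bound is met and every such pair is optimal.

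Finally, to show these exhaust the optimal set, I would trace the equalities backward. Optimality forces $\|Z\|_*=\|\widetilde{Z}\|_*$ together with $\|\widetilde{Z}\|_*=\mbox{tr}(\widetilde{Z})$, and likewise for $L$. The equality $\|M\|_*=\mbox{tr}(M)$ holds iff $M$ is symmetric positive semi-definite, which follows from von Neumann's trace inequality together with the condition for its equality (the left and right singular vectors of $M$ must coincide). The equality $\|V_A^TZV_A\|_*=\|Z\|_*$ must force the row and column spaces of $Z$ into $\mbox{range}(V_A)$, so that $Z=V_A\widehat{W}V_A^T$ with $\widehat{W}:=\widetilde{Z}$ symmetric positive semi-definite; symmetrically $L=U_A\widetilde{L}U_A^T$. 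Solving the reduced identity then gives $\widetilde{L}=\Sigma_A(I-\widehat{W})\Sigma_A^{-1}$, and imposing $\widetilde{L}=\widetilde{L}^T$ yields $\Sigma_A^2(I-\widehat{W})=(I-\widehat{W})\Sigma_A^2$, so $\widehat{W}$ commutes with $\Sigma_A$ (block-compatibility) and $\widetilde{L}=I-\widehat{W}$ is positive semi-definite. I expect the main obstacle to be exactly this last completeness step: unlike the Frobenius norm, the projection inequality $\|V_A^TZV_A\|_*\le\|Z\|_*$ is not automatically tight, and pinning down when it is tight needs a dual-certificate / subdifferential argument for $\|\cdot\|_*$ rather than a one-line orthogonality computation.
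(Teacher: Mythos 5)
The paper does not actually prove this lemma: it is imported verbatim from \cite{Zhang:Counterexample} and used as a black box, so there is no in-paper argument to compare against. Judged on its own, your proof is correct and follows the natural route: compress the constraint $A=AZ+LA$ to $\Sigma_A\widetilde{Z}+\widetilde{L}\Sigma_A=\Sigma_A$, establish the lower bound $\|Z\|_*+\|L\|_*\ge \mathrm{tr}(\widetilde{Z})+\mathrm{tr}(\widetilde{L})=r$, verify the candidates attain $r$ (feasibility reducing exactly to $\Sigma_A\widehat{W}=\widehat{W}\Sigma_A$, which is the block-compatibility condition), and then characterize the equality case. All of these steps check out, including the derivation $\widetilde{L}=\Sigma_A(I-\widehat{W})\Sigma_A^{-1}$ and the observation that symmetry of $\widetilde{L}$ forces $\widehat{W}$ to commute with $\Sigma_A^2$, hence with $\Sigma_A$ since its diagonal entries are positive.

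The one step you flag as the main obstacle --- that $\|V_A^TZV_A\|_*=\|Z\|_*$ forces $Z=V_AV_A^TZV_AV_A^T$ --- is indeed the only place requiring extra care, but it does not need a dual-certificate argument. Writing $Z$ in block form $\bigl(\begin{smallmatrix}a&b\\ c&d\end{smallmatrix}\bigr)$ with respect to $\mathrm{Range}(V_A)\oplus\mathrm{Range}(V_A)^{\perp}$, one has $\|a\|_*=\mathrm{tr}\bigl((aa^T)^{1/2}\bigr)\le\mathrm{tr}\bigl((aa^T+bb^T)^{1/2}\bigr)=\|[a\ b]\|_*\le\|Z\|_*$ by operator monotonicity of the square root, and equality of traces between ordered positive semi-definite matrices forces $bb^T=0$; the same argument on rows kills $c$ and $d$. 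With that elementary fill-in your proof is complete and establishes both directions of the ``complete solutions'' claim.
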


Now we are ready to prove Theorem \ref{theorem: relationships about relaxed Robust LatLRR}.

\begin{proof}\textbf{(Theorem \ref{theorem: relationships about relaxed Robust LatLRR})}
The proof is similar to that of Theorem \ref{theorem: relationships about original RSI}. The only difference is that we need to use Lemma \ref{lemma: solution to relaxed LatLRR} rather than Lemma \ref{lemma: solution to original LRR}.
\comment{
The proof is similar to that of Theorem \ref{theorem:
relationships about original Robust LatLRR}. For brevity, we only
sketch it. Suppose $(\widetilde{Z},\widetilde{L},\widetilde{E})$
is a better solution than $(Z^*,L^*,E^*)$ to relaxed R-LatLRR
\eqref{equ: generalized nuclear norm Robust LatLRR}, i.e.,
\begin{equation}
\label{equ: relaxed Robust LatLRR middle}
||\widetilde{Z}||_*+||\widetilde{L}||_*+\lambda f(\widetilde{E})<||Z^*||_*+||L^*||_*+\lambda f(E^*).
\end{equation}
Then according to Lemma \ref{lemma: solution to relaxed LatLRR},
we have
\begin{equation}
\label{equ: relaxed Robust LatLRR left}
||\widetilde{Z}||_*+||\widetilde{L}||_*+\lambda f(\widetilde{E})=\mbox{rank}(X-\widetilde{E})+\lambda f(\widetilde{E}),
\end{equation}
\begin{equation}
\label{equ: relaxed Robust LatLRR right}
||Z^*||_*+||L^*||_*+\lambda f(E^*)=\mbox{rank}(X-E^*)+\lambda f(E^*).
\end{equation}
Thus we get a contradiction by considering \eqref{equ: relaxed
Robust LatLRR middle}, \eqref{equ: relaxed Robust LatLRR left},
and \eqref{equ: relaxed Robust LatLRR right}.

Conversely, suppose the R-PCA problem \eqref{equ: generalized
RPCA} has a better solution $(\widetilde{A},\widetilde{E})$ than
$(X-E^*,E^*)$. Similar to Theorem \ref{theorem: relationships
about original Robust LatLRR}, the following inequality
\begin{equation}
||\widetilde{A}^\dag\widetilde{A}||_*+||\boldsymbol{0}||_*+\lambda
f(\widetilde{E})<||Z^*||_*+||L^*||_*+\lambda f(E^*)
\end{equation}
can be deduced and a contradiction results.
}
\end{proof}

Finally, viewing R-PCA as a hinge we connect all the models
considered in Section~\ref{section: Theoretical Relationships
between Low Rank Models}. We now prove Corollary \ref{corollary:
complete connections}.
\begin{proof}\textbf{(Corollary \ref{corollary: complete connections})}
According to Theorems \ref{theorem: relationships about original
RSI}, \ref{theorem: relationships about relaxed RSI},
\ref{theorem: relationships about original Robust LatLRR}, and
\ref{theorem: relationships about relaxed Robust LatLRR}, the
solution to R-PCA and those of other models are mutually
expressible. Next, we build the relationships among \eqref{equ:
generalized rank RSI}, \eqref{equ: generalized nuclear norm RSI},
\eqref{equ: generalized rank Robust LatLRR}, and \eqref{equ:
generalized nuclear norm Robust LatLRR}. For simplicity, we only
take \eqref{equ: generalized rank RSI} and \eqref{equ: generalized
nuclear norm RSI} for example. The proofs of the remaining
connections are similar.

Suppose $(Z^*,E^*)$ is optimal to the original R-LRR problem
\eqref{equ: generalized rank RSI}. Then based on Theorem
\ref{theorem: relationships about original RSI}, $(X-E^*,E^*)$ is
an optimal solution to the R-PCA problem \eqref{equ: generalized
RPCA}. Then Theorem \ref{theorem: relationships about relaxed
RSI} concludes that $((X-E^*)^\dag(X-E^*),E^*)$ is a minimizer of
the relaxed R-LRR problem \eqref{equ: generalized nuclear norm
RSI}. Conversely, suppose that $(Z^*,E^*)$ is optimal to the
relaxed R-LRR problem \eqref{equ: generalized nuclear norm RSI}.
By Theorems \ref{theorem: relationships about original RSI} and
\ref{theorem: relationships about relaxed RSI}, we conclude that
$((X-E^*)^\dag(X-E^*)+SV_{X-E^*}^T,E^*)$ is an optimal solution to
the original R-LRR problem \eqref{equ: generalized rank RSI},
where $V_{X-E^*}$ is the matrix of right singular vectors in the
skinny SVD of $X-E^*$ and $S$ is any matrix satisfying
$V_{X-E^*}^TS=0$.
\end{proof}

\section{Applications of the Theoretical Analysis}
\label{section:applications of theoretical analysis} In this
section, we discuss pragmatic values of our theoretical results in
Section~\ref{section: Theoretical Relationships between Low Rank
Models}. As one can see in Figure~\ref{fig: Relationships}, we put
R-PCA at the center of all the low rank models under consideration
because it is the simplest one among all, which implies that we prefer
deriving solutions of other models from that of R-PCA. For
simplicity, we call our two step approach, i.e., first reducing to
R-PCA and then expressing desired solution by the solution of R-PCA, as REDU-EXPR
method. There are two advantages of REDU-EXPR.
\begin{itemize}
\item We could obtain {\it better} solutions to other low rank models (cf. Remark \ref{remark: better solution}). R-PCA has a solid theoretical foundation.
\cite{Candes} proved that under certain conditions solving the
relaxed R-PCA~\eqref{equ:relaxed RPCA model}, which is convex,
can recover the ground truth solution at an overwhelming
probability (See Section~\ref{subsubsection: Sparse Element-wise Noises}). \cite{Xu:CSPCP,Zhang2014NIPS} also
proved similar results for column sparse relaxed
R-PCA~\eqref{equ: CSPCP} (See Section~\ref{subsubsection: Sparse Column-wise Noises}). Then by the mutual-expressibility of
solutions, we could also obtain globally optimal solutions to
other models. In contrast, the optimality of a solution is
uncertain if we solve other models using specific algorithms,
e.g., ADM~\citep{Lin2}, due to their non-convex nature. \item We
could have \emph{much faster} algorithms for other low rank
models. Due to the simplicity of R-PCA, solving R-PCA is much
faster than other models. In particular, the expensive $O(mn^2)$
complexity of matrix-matrix multiplication (between $X$ and $Z$ or
$L$) could be avoided. Moreover, there are low complexity
randomized algorithms for solving R-PCA, making the computational
cost of solving other models even lower. In particular, we propose
an $\ell_{2,1}$ filtering algorithm for column sparse relaxed R-PCA
(\eqref{equ: generalized PCP} with $f(E)=\|E\|_{\ell_{2,1}}$). If
one is directly faced with other models, it is non-trivial to
design low complexity algorithms (either deterministic or
randomized\footnote{We have to emphasize that although there is
linear time SVD algorithm~\citep{avron2010blendenpik,mahoney2011randomized} for computing SVD at low cost,
which is typically needed in the existing solvers for all models,
linear time SVD is known to have relative error. Moreover, even adopting
linear time SVD the whole complexity could still be $O(mn^2)$ \emph{due
to matrix-matrix multiplications \textbf{outside} the SVD computation in each iteration} if there is no
careful treatment.}).
\end{itemize}
In summary, based on our analysis we could achieve low rankness
based subspace clustering with better performance and faster
speed.

\subsection{Better Solution for Subspace Recovery}
\label{subsection: Better Solution for Subspace Recovery}
\comment{ Throughout our previous analysis, robust PCA is an exact
surrogate to various low rank models, including RSI and R-LatLRR.
As a common practice, it is natural to consider using PCP
\begin{equation}
\label{equ: f(E) PCP}
\min_{A,E} ||A||_*+\lambda f(E),\ \ \mbox{s.t.}\ \ X=A+E,
\end{equation}
to approximately solve RSI and R-LatLRR instead of robust PCA,
just for overcoming the computational obstacle of the non-convex
problems. There are three benefits for following such a procedure:
1. it avoids local minimizers to a class of non-convex problems,
where which solutions are converged actually depends on the
selection of the initialization; 2. it is often easier to conduct
theoretical analysis on convex optimization problems, as some
existing works~\citep{Wright:RPCA,Candes,Xu:CSPCP} have proved the
effectiveness of PCP on various noise functions, e.g. $\ell_1$ and
$\ell_{2,1}$ norm, by analyzing its corresponding dual norm; 3.
Our experiments verifies PCP possesses higher robustness than
robust PCA to sample specific corruptions, where the latter model
can only be solved for local solution. Detailed comparison is
presented by synthetic and real experiments, as shown in Section
6.

Although being non-convex, it is not true that model
\eqref{equ:relaxed RSI} cannot be globally solved in any
situation. The following corollary reveals a special case of the
noise function $f(\cdot)$, where global solution is formalized as
closed form. An alternative proof can be found in \cite{Favaro}.
However, we provide another simpler demonstration as a corollary
of our main results. }

As stated above, reducing to R-PCA could help overcome the
non-convexity issue of the low rank recovery models we consider.
We defer the numerical verification of this claim until
Section~\ref{subsection:comparison of optimality on synthetic
data}. In this subsection, we discuss the theoretical conditions under which
reducing to R-PCA succeeds for subspace clustering problem.

We focus on the application of Theorem \ref{theorem: relationships about relaxed RSI}, which shows that given
the solution $(A^*,E^*)$ to R-PCA problem \eqref{equ: generalized RPCA}, the optimal solution to relaxed
R-LRR problem \eqref{equ: generalized nuclear norm RSI} is presented by $((A^*)^\dag A^*, E^*)$. Note that
$(A^*)^\dag A^*$ is called the Shape Interaction Matrix in the field of
structure from motion, and has been proven
to be block diagonal by \citep{Costeira} when the column vectors of $A^*$ lie strictly on
independent subspaces and the sampling number of $A^*$ from each subspace is larger than the subspace dimension~\citep{LiuG1}. The block diagonal pattern reveals the structure
of each subspace and hence offers the possibility of subspace
clustering. Thus to illustrate the success of our approach, the remainder is to show that under which conditions
R-PCA problem exactly recovers the noiseless data matrix, or correctly recognizes the indices of noises.
We discuss the cases where the corruptions are sparse element-wise noises, sparse column-wise noises, and dense Gaussian noises, respectively.

\subsubsection{Sparse Element-wise Noises}
\label{subsubsection: Sparse Element-wise Noises}
Suppose each column of the data matrix is an observation. In the case where the corruptions are sparse element-wise noises, we assume that the positions of the corrupted elements sparsely and uniformly distribute on the input matrix. In this case, we consider the usage of $\ell_1$ norm, i.e., model \eqref{equ:relaxed RPCA model} to remove the corruptions.

\cite{Candes} gave certain conditions under which model \eqref{equ:relaxed RPCA model} exactly recovers the noiseless data $A_0$ from the corrupted observations $X=A_0+E_0\in\mathbb{R}^{m\times n}$. We apply them to the success conditions of our approach. Firstly, to avoid the possibility that the low rank part $A_0$ is sparse, $A_0$ needs to satisfy the following incoherent conditions:
\begin{subequations}
\begin{align}
&\ \ \ \ \ \ \ \ \ \ \ \ \ \max_i ||V_0^Te_i||_2\le\sqrt{\frac{\mu r}{n}},\label{equ: incoherence 1}\\
&\max_i ||U_0^Te_i||_2\le\sqrt{\frac{\mu r}{m}},\ \ ||U_0V_0^T||_\infty\le\sqrt\frac{\mu r}{mn},\label{equ: another incoherence}
\end{align}
\end{subequations}
where $U_0\Sigma_0V_0^T$ is the skinny SVD of $A_0$, $r=\mbox{rank}(A_0)$, and $\mu$ is a constant.
\comment{
Notice that for the subspace clustering problem where the columns of $A_0$ are drawn from multiple independent subspaces, the above incoherence is no harder to satisfy than the case where data are drawn from a single subspace. This is because we can view the union of multiple subspaces as a single subspace with higher dimension.
}
The second assumption for the success of the algorithm is that the dimension of the sum of the subspaces is sufficiently low and the support number $s$ of the noise matrix $E_0$ is not too large. Namely,
\begin{equation}
\label{equ: element-wise range}
\mbox{rank}(A_0)\le\rho_r\frac{n_{(2)}}{\mu(\log n_{(1)})^2}\ \ \ \ \mbox{and}\ \ \ \ s\le\rho_s mn,
\end{equation}
where $\rho_r$ and $\rho_s$ are numerical constants. Under these conditions, \cite{Candes} justified that relaxed R-PCA \eqref{equ:relaxed RPCA model} with $\lambda=1/\sqrt{n_{(1)}}$ exactly recovers the noiseless data $A_0$. Thus the algorithm of reducing to R-PCA succeeds, as long as the subspaces are independent and the sampling number from each subspace is larger than the subspace dimension~\citep{LiuG1}.

\subsubsection{Sparse Column-wise Noises}
\label{subsubsection: Sparse Column-wise Noises}
In more general case, the noises exist in small number of columns, i.e., each non-zero column of $E_0$ corresponds to a corruption. In this case, we consider the usage of $\ell_{2,1}$ norm, i.e., model \eqref{equ: CSPCP} to remove the corruptions.

There have been several literatures that investigated the theoretical conditions under which column sparse relaxed R-PCA \eqref{equ: CSPCP} succeeds~\citep{Xu:CSPCP,ICML2011Chen_469,Zhang2014NIPS}. To the best of our knowledge, our discovery in \citep{Zhang2014NIPS} gave the broadest range under which model \eqref{equ: CSPCP} exactly identifies the indices of noises. Notice that it is impossible to recover a corrupted sample into its right subspace, since the magnitude of noises here can be arbitrarily large. Moreover, for the observations like
\begin{equation}
M=
\begin{bmatrix}
0 & 1 & 1 & \cdots & 1 & 0 & 0 & \cdots & 0\\
0 & 0 & 0 & \cdots & 0 & 1 & 1 & \cdots & 1\\
1 & 0 & 0 & \cdots & 0 & 0 & 0 & \cdots & 0\\
\vdots & \vdots & \vdots &  & \vdots & \vdots & \vdots &  & \vdots\\
0 & 0 & 0 & \cdots & 0 & 0 & 0 & \cdots & 0\\
\end{bmatrix},
\end{equation}
where the first column is the corrupted sample while others are noiseless, it is even harder to identify that the ground truth of the first column of $M$ belongs to the space Range$(e_1)$ or the space Range$(e_2)$.
So we remove the corrupted observation identified by the algorithm rather than exactly recovering its ground truth, and use the remaining noiseless data to reveal the real subspaces structure.

According to our discovery~\citep{Zhang2014NIPS}, the success of model \eqref{equ: CSPCP} requires the incoherence as well. However, only the condition \eqref{equ: incoherence 1} is needed, which is sufficient to guarantee that the low rank part cannot be column sparse. Similarly, to avoid the column sparse part being low rank when the number of its non-zero columns is comparable to $n$, we assume $||\mathcal{B}(E_0)||\le\sqrt{\log n}/4$, where $\mathcal{B}(E_0)=\{H:
\mathcal{P}_{\mathcal{I}^\perp}(H)=\mathbf{0};
H_{:j}=[E_0]_{:j}/||[E_0]_{:j}||_2,
[E_0]_{:j}\in\mathcal{I}$\}, $\mathcal{I}=\{j: [E_0]_{:j}\not=\mathbf{0}\}$, and $\mathcal{P}_{\mathcal{I}^\perp}$ is a projection onto the complement of $\mathcal{I}$. The dimension of the sum of the subspaces also requires to be low and the column support number $s$ of the noise matrix $E_0$ is not too large. More specifically,
\begin{equation}
\label{equ: column-wise range}
\mbox{rank}(A_0)\le\rho_r\frac{n_{(2)}}{\mu\log n_{(1)}}\ \ \ \ \mbox{and}\ \ \ \ s\le\rho_s n,
\end{equation}
where $\rho_r$ and $\rho_s$ are numerical constants. Note that the range of the successful rank$(A_0)$ in \eqref{equ: column-wise range} is broader than that of \eqref{equ: element-wise range}, and has been proved to be tight~\citep{Zhang2014NIPS}. Moreover, to avoid $[A_0+E_0]_{:j}$ lying in an incorrect subspace, we assume $[E_0]_{:j}\not\in\mbox{Range}(A_0)$ for $\forall j\in \mathcal{I}$. Under these conditions, our theorem justifies that column sparse relaxed R-PCA \eqref{equ: CSPCP} with $\lambda=1/\sqrt{\log n_{(1)}}$ exactly recognizes the indices of noises. Thus our approach succeeds.

\comment{
Suppose the data are drawn from 5 independent subspaces of dimension $D$ in the ambient space $\mathbb{R}^{5D}$. $100D$ samples are drawn from each subspace and so we have a $5D\times 100D$ data matrix.}

\subsubsection{Dense Gaussian Noises}
\label{subsubsection: Dense Gaussian Noises}
Assume that the data $A_0$ lies in an $r$-dimension subspace where $r$ is relatively small. For dense Gaussian noises, we consider the usage of squared Frobenius norm, leading to the following relaxed R-LRR problem:
\begin{equation}
\label{equ:Favaro conclusion}
\min_{A,Z} ||Z||_*+\lambda||E||_F^2,\ \ \mbox{s.t.}\ \ A=AZ,\ \ X=A+E.
\end{equation}
We quote the following result from
\citep{Favaro}, which gave the closed-form solution to the problem \eqref{equ:Favaro conclusion}. Based on
our results in Section \ref{section: Theoretical Relationships between Low Rank Models}, we give a new proof.

\begin{corollary}[\cite{Favaro}]\label{coro:favaro}
Let $X=U\Sigma V^T$ be the SVD of the data matrix $X$. Then the
optimal solution to \eqref{equ:Favaro conclusion}
is given by $A^*=U_1\Sigma_1V_1^T$ and $Z^*=V_1V_1^T$, where
$\Sigma_1$, $U_1$, and $V_1$ correspond to the top $r=\arg\min_k
k+\lambda\sum_{i>k}\sigma_i^2$ singular values and singular
vectors of $X$, respectively.
\end{corollary}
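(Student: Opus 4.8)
The plan is to recognize problem \eqref{equ:Favaro conclusion} as a special case of the relaxed R-LRR model \eqref{equ: generalized nuclear norm RSI} and then invoke Theorem \ref{theorem: relationships about relaxed RSI}, which reduces it to a single R-PCA problem whose solution is classical. Indeed, substituting $E=X-A$ and keeping the constraint $A=AZ$, the model \eqref{equ:Favaro conclusion} is exactly \eqref{equ: generalized nuclear norm RSI} with the choice $f(E)=\|E\|_F^2$; this is legitimate because our analysis permits an arbitrary regularizer on the noise term.

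By Theorem \ref{theorem: relationships about relaxed RSI}, if $(A^*,E^*)$ is any minimizer of the corresponding original R-PCA problem \eqref{equ: generalized RPCA} with $f(E)=\|E\|_F^2$, that is,
\begin{equation}
\min_{A,E}\ \mbox{rank}(A)+\lambda\|E\|_F^2,\ \ \mbox{s.t.}\ \ X=A+E,
\end{equation}
then $((A^*)^\dag A^*,E^*)$ is an optimal solution of \eqref{equ:Favaro conclusion}. So it suffices to solve this R-PCA problem and then form the Shape Interaction Matrix $(A^*)^\dag A^*$.

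Next I would solve the R-PCA problem, equivalently $\min_A \mbox{rank}(A)+\lambda\|X-A\|_F^2$, in two stages. First fix the target rank, say $\mbox{rank}(A)=k$. By the Eckart--Young theorem the minimizer of $\|X-A\|_F^2$ over all rank-$k$ matrices is the rank-$k$ truncated SVD of $X$, with residual $\sum_{i>k}\sigma_i^2$. Hence the R-PCA objective collapses to the scalar function $g(k)=k+\lambda\sum_{i>k}\sigma_i^2$ ranging over the finitely many values $k\in\{0,1,\dots,\mbox{rank}(X)\}$. Minimizing over $k$ gives $r=\arg\min_k g(k)$, and the corresponding optimal low rank part is $A^*=U_1\Sigma_1V_1^T$, the truncation of $X$ to its top $r$ singular values and vectors. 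Finally, since $U_1\Sigma_1V_1^T$ is the skinny SVD of $A^*$, I would compute $(A^*)^\dag A^*=V_1\Sigma_1^{-1}U_1^TU_1\Sigma_1V_1^T=V_1V_1^T$, matching the asserted $Z^*=V_1V_1^T$ (cf. the Remark following Proposition \ref{prop:solution to relaxed noiseless LRR}).

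I expect no serious obstacle: the reduction to R-PCA is handed to us by Theorem \ref{theorem: relationships about relaxed RSI}, and the resulting rank-penalized Frobenius problem is standard. The one step requiring genuine care is the inner minimization, where one must invoke Eckart--Young to certify that, for each fixed target rank $k$, the best Frobenius-norm approximation of $X$ is the $k$-truncated SVD, after which the problem reduces to the one-dimensional search defining $r$. Any non-uniqueness, arising from tied singular values at the truncation threshold or from a non-unique $\arg\min_k g(k)$, is harmless here, since the statement only requires exhibiting one optimal pair $(A^*,Z^*)$.
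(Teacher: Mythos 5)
Your proposal is correct and follows essentially the same route as the paper: reduce \eqref{equ:Favaro conclusion} to the original R-PCA problem with $f(E)=\|E\|_F^2$ via Theorem \ref{theorem: relationships about relaxed RSI}, solve the rank-penalized Frobenius problem by probing each fixed rank $k$ (the Eckart--Young step you spell out is exactly what the paper means by ``probing different rank $k$''), and then form $Z^*=(A^*)^\dag A^*=V_1V_1^T$. The only difference is that you make the inner Eckart--Young minimization explicit, which the paper leaves as an observation.
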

\begin{proof} The optimal solution to problem
\begin{equation}
\min_A \mbox{rank}(A)+\lambda||X-A||_F^2,
\end{equation}
is $A^*=U_1\Sigma_1V_1^T$, where $\Sigma_1$, $U_1$, and $V_1$
correspond to the top $r=\arg\min_k k+\lambda\sum_{i>k}\sigma_i^2$
singular values and singular vectors of $X$, respectively. This
can be easily seen by probing different rank $k$ of $A$ and
observing that $\min\limits_{\mbox{\small rank}(A)\leq
k}||X-A||_F^2 = \sum_{i>k}\sigma_i^2$.

Next, according to Theorem \ref{theorem: relationships about
relaxed RSI}, where $f$ is chosen as the squared Frobenius
norm, the optimal solution to problem \eqref{equ:Favaro
conclusion} is given by $A^*=U_1\Sigma_1V_1^T$ and $Z^*=(A^*)^\dag
A^*=V_1V_1^T$ as claimed.
\end{proof}

Corollary \ref{coro:favaro} offers us an insight into the relaxed R-LRR \eqref{equ:Favaro conclusion}: we can first solve the classical PCA problem with parameter $r=\arg\min_k k+\lambda\sum_{i>k}\sigma_i^2$ and then adopt the Shape Interaction Matrix of the denoised
data matrix as the affinity matrix for subspace clustering. This is consistent with the well-known fact that, empirically and theoretically, PCA is capable of effectively dealing with the small, dense Gaussian noises. Note that one needs to tune the parameter $\lambda$ in problem \eqref{equ:Favaro conclusion} in order to obtain a suitable parameter $r$ for the PCA problem.

\subsubsection{Other Cases}
\label{subsubsection: other cases}

Although our approach works well under rather broad conditions, as mentioned above, it might fail in some cases, e.g., the noiseless data matrix is not low rank. However, for certain data structure, the following numerical experiment shows that reducing to R-PCA correctly identifies the indices of noises even though the ground truth data matrix is of full rank\comment{\footnote{In this case, condition \eqref{equ: column-wise range} gives $m\le\rho_rm\mu^{-1}(\log n)^{-1}$. Notice that the constants $\rho_r$ and $\mu$ depend on the data distribution, and the above inequality is possible to hold when $\mu\log n\le \rho_r$.}}. The synthetic data are generated as follows. In the linear space
$\mathbb{R}^{5D}$, we construct five independent $D$ dimensional
subspaces $\{S_i\}_{i=1}^5$, whose bases $\{U_i\}_{i=1}^5$ are
randomly generated column orthonormal matrices. Then $20D$ points
are sampled from each subspace by multiplying its basis
matrix with a $D\times 20D$ Gaussian distribution matrix, whose
entries are i.i.d. $\mathcal{N}(0,1)$. Thus we obtain a $5D\times
100D$ structured sample matrix without noise, and the noiseless data matrix is of rank $5D$. We then add $15\%$ column-wise Gaussian noises whose entries are i.i.d. $\mathcal{N}(0,1)$ on the noiseless matrix, and solve model \eqref{equ: CSPCP} with $\lambda=1/\sqrt{\log (100D)}$. Table \ref{table: Exact Recovery for Random Problem of Varying Size} reports the Hamming distance between the ground truth indices and the identified indices by model \eqref{equ: CSPCP}, under different input sizes. It shows that reducing to R-PCA succeeds for structured data distributions even when the dimension of the sum of the subspaces is equal to that of the ambient space. In contrast, the algorithm fails for unstructured data distributions, e.g., the noiseless data is $5D\times100D$ Gaussian matrix whose element is totally random, obeying i.i.d. $\mathcal{N}(0,1)$. Since the main focus of the paper is the relations among several low rank models and the success conditions are within the research of R-PCA, the theoretical analysis on how data distribution influences the success of R-PCA will be our future work.
\makeatletter\def\@captype{table}\makeatother
\begin{table}
\caption{Exact identification of indices of noises on the matrix $M\in\mathbb{R}^{5D\times100D}$. Here
rank$(A_0)=5D$, $||E_0||_{2,0}=15D$, and $\lambda=1/\sqrt{\log (100D)}$. $\mathcal{I}^*$ refers to the indices obtained by solving model \eqref{equ: CSPCP} and $\mathcal{I}_0$ refers to the ground truth indices of noises.}
\label{table: Exact Recovery for Random Problem of Varying Size}
\begin{center}
\begin{tabular}{|c|c||c|c||c|c||c|c|}
\hline
$D$ & dist$(\mathcal{I}^*,\mathcal{I}_0)$ & $D$ & dist$(\mathcal{I}^*,\mathcal{I}_0)$ & $D$ & dist$(\mathcal{I}^*,\mathcal{I}_0)$ & $D$ & dist$(\mathcal{I}^*,\mathcal{I}_0)$\\
\hline
5 & 0 & 10 & 0 & 50 & 0 & 100 & 0\\
\hline
\end{tabular}
\end{center}
\end{table}

\subsection{Fast Algorithms for Subspace Recovery}\label{sec:fast_algs}
Representative low rank subspace recovery models, like LRR and
LatLRR, are solved by ADM~\citep{Lin2} and the complexity is
$O(mn^2)$~\citep{LiuG2,LiuG3,LiuG1}. For LRR, by employing
linearized ADM (LADM) and some advanced tricks for computing
partial SVD, the resulted algorithm is of $O(rn^2)$ complexity,
where $r$ is the rank of optimal $Z$. We show that our REDU-EXPR
approach can be much faster.

We take a real experiment for an example. We test face image
clustering on the extended YaleB database, which consists of 38
persons with 64 different illuminations for each person. All the
faces are frontal and thus images of each person lie in a low
dimensional subspace \citep{Belhumeur:Fisherface}. We generate the input data as follows.
We reshape each image into a 32,256 dimensional column vector.
Then the data matrix $X$ is $32,256\times 2,432$. We record the
running times and the clustering accuracies\footnote{\cite{LiuG2}
reported an accuracy of 62.53\% by LRR, but there were only 10
classes in their data set. In contrast, there are 38 classes in
our data set.} of relaxed LRR~\citep{LiuG2,LiuG1} and relaxed
R-LRR~\citep{Favaro,Wei}. LRR is solved by ADM. For R-LRR we test
three algorithms. The first one is traditional ADM, i.e., updating
$A$, $E$, and $Z$ alternately by minimizing the augmented
Lagrangian function of relaxed R-LRR:
\begin{equation}
L(A,E,Z)=||Z||_*+\lambda f(E)+\langle X-E-A, Y_1\rangle+\langle A-AZ, Y_2\rangle+\frac{\mu}{2}(||X-E-A||_F^2+||A-AZ||_F^2).
\end{equation}
The second algorithm is partial ADM, which updates $A$, $E$, and
$Z$ by minimizing the partial augmented Lagrangian function:
\begin{equation}
L(A,E,Z)=||Z||_*+\lambda f(E)+\langle X-E-A, Y\rangle+\frac{\mu}{2}||X-E-A||_F^2,
\end{equation}
subject to $A=AZ$. This method is adopted by \cite{Favaro}. A key
difference between partial ADM and traditional ADM is that the former updates $A$ and
$Z$ simultaneously by utilizing
Corollary~\ref{coro:favaro}. For more details, please refer to
\citep{Favaro}. The third method is REDU-EXPR. It is adopted by
\cite{Wei}. Except the ADM method for solving R-LRR, we run the
codes provided by their respective authors.

One can see from Table~\ref{tab:YaleB} that REDU-EXPR is
significantly faster than ADM based method. Actually, solving
R-LRR by ADM did not converge. We want to point out that the
partial ADM method utilized the closed-form solution shown in
Corollary~\ref{coro:favaro}. However, its speed is still much inferior to
that of REDU-EXPR.
\makeatletter\def\@captype{table}\makeatother
\begin{table}
\caption{Unsupervised face image clustering results on the
Extended YaleB database.} \label{tab:YaleB}
\begin{center}
\begin{tabular}{|c|c||c|c|}
\hline
Model & Method & Accuracy & CPU Time (h)\\
\hline\hline
LRR & ADM & - & $>$10\\
R-LRR & ADM & - & did not converge\\
R-LRR & partial ADM & - & $>$10\\
R-LRR & REDU-EXPR & 61.6365\% & 0.4603\\
\hline
\end{tabular}
\end{center}
\end{table}

\comment{ Furthermore, there are still spaces for further speeding
up the efficiency by designing more effective PCP algorithms. The
currently existed algorithms mainly include the dual
method~\citep{Ganesh}, the accelerated proximal gradient
method~\citep{Lin1} and the ADM~\citep{Lin2}. However, all these
methods require singular value decomposition (SVD) as a tool for
solving subproblem
\begin{equation}
\min_A \eta||A||_*+\frac{1}{2}||A-W||_F^2.
\end{equation}
So the complexity is still high for very large data sets. A more
recent proposed technique, termed $\ell_1$
filtering~\citep{LiuR1}, first solves a small scale PCP problem
via ADM and then uses linear representation to get the whole
solution. The complexity was reported as $O(r^2(m+n))$. Here we
review some of main techniques appeared in $\ell_1$
filtering~\citep{LiuR1}, and then, by a little modification, we
propose our $\ell_{2,1}$ filtering algorithm on the $\ell_{2,1}$
norm. }

For large scale data, neither $O(mn^2)$ nor $O(rn^2)$ is fast
enough. Fortunately, for R-PCA it is relatively easily to design
low complexity randomized algorithms to further reduce its
computational load. \cite{LiuR1} has reported an efficient
randomized algorithm called $\ell_1$ filtering to solve R-PCA when
$f(E)=\|E\|_{\ell_1}$. The $\ell_1$ filtering is completely parallel and
its complexity is only $O(r^2(m+n))$ -- linear to the matrix
size. In the following, we sketch the $\ell_1$ filtering
algorithm~\citep{LiuR1}, and in the same spirit propose a novel
$\ell_{2,1}$ filtering algorithm for solving column sparse R-PCA
\eqref{equ: CSRPCA}, i.e., R-PCA with $f(E)=\|E\|_{2,1}$.

\subsubsection{Outline of $\ell_1$ Filtering Algorithm~\citep{LiuR1}}
The $\ell_1$ filtering algorithm aims at solving the R-PCA problem
\eqref{equ: generalized RPCA} with $f(E)=\|E\|_1$. There are two
main steps. The first step is to recover a seed matrix. The second
step is to process the rest part of data matrix by $\ell_1$-norm
based linear regression.

\paragraph{Recovery of a Seed Matrix}
Assume that the target rank $r$ of the low rank component $A$ is
very small compared with the size of the data matrix, i.e., $r\ll
\min\{m,n\}$. By randomly sampling an $s_rr\times s_cr$ submatrix
$X^s$ from $X$, where $s_r,s_c>1$ are oversampling rates, we
partition the data matrix $X$, together with the underlying matrix
$A$ and the noise $E$, into four parts (for simplicity we assume
that $X^s$ is at the top left corner of $X$):
\begin{equation}
\label{equ:complete A}
X=\begin{bmatrix}
X^s & X^c\\
X^r & \widetilde{X}^s
\end{bmatrix},\quad
A=\begin{bmatrix}
A^s & A^c\\
A^r & \widetilde{A}^s
\end{bmatrix},\quad
E=\begin{bmatrix}
E^s & E^c\\
E^r & \widetilde{E}^s
\end{bmatrix}.
\end{equation}

We firstly recover the seed matrix $A^s$ of the underlying matrix
$A$ from $X^s$ by solving a small scale relaxed R-PCA problem:
\begin{equation}
\min_{A^s,E^s} ||A^s||_*+\lambda^s||E^s||_{\ell_1}, \ \
\mbox{s.t.} \ \ X^s=A^s+E^s,
\end{equation}
where $\lambda^s=1/\sqrt{\max\{s_rr,s_cr\}}$ which is suggested in
\citep{Candes} for exact recovery of the underlying $A^s$. This
problem can be efficiently solved by ADM~\citep{Lin2}.

\comment{ According to Theorem 1.1 in \cite{Candes}, it is
reasonable to believe that $A^s$ could be exactly recovered as the
rank of the low rank component is not too large and the sparse
component is sparse.}

\paragraph{$\ell_1$ Filtering}

Since rank$(A)=r$ and $A^s$ is a randomly sampled $s_r r\times s_c
r$ submatrix of $A$, with an overwhelming probability
rank$(A^s)=r$. So $A^c$ and $A^r$ must be represented as linear
combinations of the columns or rows in $A^s$. Thus we obtain the
following $\ell_1$-norm based linear regression problems:
\begin{equation}
\min_{Q,E^c} ||E^c||_{\ell_1},\ \ \mbox{s.t.}\ \ X^c=A^sQ+E^c,
\end{equation}
\begin{equation}
\min_{P,E^r} ||E^r||_{\ell_1},\ \ \mbox{s.t.}\ \ X^r=P^TA^s+E^r.
\end{equation}

As soon as $A^c=A^sQ$ and $A^r=P^TA^s$ are computed, the
generalized Nystr$\ddot{\mbox{o}}$m method~\citep{WangJ} gives
\begin{equation}
\widetilde{A}^s=P^T A^s Q.
\end{equation}
Thus we recover all the submatrices in $A$. As shown in
\citep{LiuR1}, the complexity of this algorithm is only
$O(r^2(m+n))$ without considering the reading and writing time.

\subsubsection{$\ell_{2,1}$ Filtering Algorithm}
$\ell_1$ filtering is for entry sparse R-PCA. For R-LRR, we need to
solve column sparse R-PCA. Unlike the $\ell_1$
case which breaks the whole matrix into four blocks, the
$\ell_{2,1}$ norm requires viewing each column in a holistic way.
So we can only partition the whole matrix into two blocks. We
inherit the idea of $\ell_1$ filtering to propose a randomized
algorithm, called $\ell_{2,1}$ filtering, to solve column sparse
R-PCA. It also consists of two steps. We first
recover a seed matrix and then process the remaining columns via
$\ell_{2}$ norm based linear regression, which turns out to be a
least square problem.

\paragraph{Recovery of a Seed Matrix}
The step of recovering a seed matrix is nearly the same as that of
the $\ell_1$ filtering method, except that we only partition the
whole matrix into two blocks. Suppose the rank of $A$ is $r\ll
\min\{m,n\}$. We randomly sample $sr$ columns of $X$, where $s>1$
is an oversampling rate. These $sr$ columns form a submatrix
$X_l$. For brevity, we assume that $X_l$ is the leftmost submatrix
of $X$. Then we may partition $X$, $A$, and $E$ as follows:
$$X=[X_l,X_r],\quad E=[E_l,E_r],\quad A=[A_l,A_r],$$
respectively. We could firstly recover $A_l$ from $X_l$ by a small
scale relaxed column sparse R-PCA problem:
\begin{equation}
\label{equ:recovery of the seed matrix} \min_{A_l,E_l}
||A_l||_*+\lambda_l||E_l||_{\ell_{2,1}}, \ \ \mbox{s.t.} \ \
X_l=A_l+E_l,
\end{equation}
where $\lambda_l=1/\sqrt{\log (sr)}$~\citep{Zhang2014NIPS}.

\paragraph{$\ell_{2,1}$ Filtering}
After the seed matrix $A_l$ is obtained, since rank$(A)=r$ and
with an overwhelming probability rank$(A_l)=r$, the columns of
$A_r$ must be linear combinations of $A_l$. So there exists a
representation matrix $Q\in\mathbb{R}^{sr\times (n-sr)}$ such that
\begin{equation}
A_r=A_lQ.
\end{equation}
On the other hand, the part $E_r$ of noise should still be column
sparse. So we have the following $\ell_{2,1}$ norm based linear
regression problem:
\begin{equation}
\label{equ:l21 filtering}
\min_{Q,E_r} ||E_r||_{\ell_{2,1}}, \ \ \mbox{s.t.} \ \ X_r=A_lQ+E_r.
\end{equation}

If \eqref{equ:l21 filtering} is solved directly by using
ADM~\citep{LiuR2}, the complexity of our algorithm will be nearly
the same as that of solving the whole original problem.
Fortunately, we can solve \eqref{equ:l21 filtering} column-wise
independently due to the separability of $\ell_{2,1}$ norms.

Let $x_r^{(i)}$, $q^{(i)}$, and $e_r^{(i)}$ represent the $i$th
column of $X_r$, $Q$, and $E_r$, respectively
($i=1,2,\cdots,n-sr$). Then problem \eqref{equ:l21 filtering}
could be decomposed into $n-sr$ subproblems:
\begin{equation}
\label{equ:least-square problem} \min_{q^{(i)},e_r^{(i)}}
||e_r^{(i)}||_2, \ \ \mbox{s.t.} \ \
x_r^{(i)}=A_lq^{(i)}+e_r^{(i)},\ i=1,\cdots,n-sr.
\end{equation}
As least square problems, \eqref{equ:least-square problem} has
closed-form solutions $q^{(i)}=A_l^\dag x_r^{(i)},
i=1,\cdots,n-sr$. Then $Q^*=A_l^\dag X_l$ and the solution to the
original problem \eqref{equ:l21 filtering} is $(A_l^\dag
X_r,X_r-A_lA_l^\dag X_r)$. Interestingly, it is the same solution
if replacing the $\ell_{2,1}$ norm in \eqref{equ:l21 filtering}
with the Frobenius norm.

\comment{
\begin{equation}
\begin{split}
&\min_{q^{(i)},e_r^{(i)}} \sum_{i=1}^{n-s_cr}||e_r^{(i)}||_2,\\
&\ \ \mbox{s.t.} \ \ x_r^{(i)}=A_lq^{(i)}+e_r^{(i)},\\
&\ \ \ \ \ \ \ \ i=1,\cdots,n-sr.
\end{split}
\end{equation}
Notice that the $n-s_cr$ constraints are independent of each
other. Hence, by adopting a subproblem strategy via making the
objective on every column to be minimum, we can accurately get the
solution of $i$th column via simply optimizing }

Note that our target is to recover the right patch $A_r=A_lQ^*$.
Let $U_{A_l}\Sigma_{A_l} V_{A_l}^T$ be the skinny SVD of $A_l$,
which is available when solving \eqref{equ:recovery of the seed
matrix}. Then $A_r$ could be written as
\begin{equation}
A_r=A_lQ^*=A_lA_l^\dag X_r=U_{A_l}U_{A_l}^TX_r.
\end{equation}
We may first compute $U_{A_l}^TX_r$ and then
$U_{A_l}(U_{A_l}^TX_r)$. This little trick reduces the complexity
of computing $A_r$.

\paragraph{The Complete Algorithm}
Algorithm~\ref{alg:l_21 filtering} summarizes our $\ell_{2,1}$
filtering algorithm for solving column sparse R-PCA.

\begin{algorithm}[h]
\caption{$\ell_{2,1}$ Filtering Algorithm for Column Sparse
R-PCA}\label{alg:l_21 filtering}
\begin{algorithmic}
\STATE {\bfseries Input:} Observed data matrix $X$ and estimated
rank $r$.
\STATE {\bfseries 1.} Randomly sample $sr$ columns from $X$ to form $X_l$.
\STATE {\bfseries 2.} Solve small scale relaxed R-PCA \eqref{equ:recovery of the seed matrix} by ADM and
obtain SVD of $A_l$: $U_{A_l}\Sigma_{A_l} V_{A_l}^T$.\vspace{-1cm}
\STATE {\bfseries 3.} Recover $A_r=A_lQ$ by solving \eqref{equ:l21
filtering}, whose solution is $A_r=U_{A_l}(U_{A_l}^TX_r)$.
\STATE {\bfseries Output:} Low rank component $A=[A_l,A_r]$ and column sparse matrix $E=X-A$.
\end{algorithmic}
\end{algorithm}
As soon as the solution $(A,E)$ to column sparse R-PCA is solved,
we can obtain the representation matrix of R-LRR $Z$ by $Z=A^\dag
A$. Note that we should not compute $Z$ naively as it is written,
whose complexity will be more than $O(mn^2)$. A more clever way is
as follows. Suppose $U_A\Sigma_AV_A^T$ is the skinny SVD of $A$,
then $Z=A^\dag A=V_AV_A^T$. On the other hand,
$A=U_{A_l}[\Sigma_{A_l}V_{A_l}^T,U_{A_l}^TX_r]$. So we only have
to compute the row space of
$\hat{A}=[\Sigma_{A_l}V_{A_l}^T,U_{A_l}^TX_r]$, where
$U_{A_l}^TX_r$ has been saved in Step 3 of Algorithm~\ref{alg:l_21
filtering}. This can be easily done by doing LQ decomposition~\citep{golub2012matrix} of
$\hat{A}$: $\hat{A}=LV^T$, where $L$ is lower triangular and
$V^TV=I$. Then $Z=VV^T$. Since LQ decomposition is much cheaper
than SVD, the above trick is very efficient and all the
matrix-matrix multiplications are $O(r^2n)$. The complete
procedure for solving R-LRR problem \eqref{equ:relaxed RSI} is
described in Algorithm \ref{algorithm: Fast RSI}.

\begin{algorithm}[h]
\caption{Subspace Clustering Based on the relaxed R-LRR Model
\eqref{equ:relaxed RSI}}
\begin{algorithmic}
\STATE {\bfseries Input:} Observed data matrix $X$, estimated rank
$r$. \STATE {\bfseries 1.} Solve relaxed column sparse R-PCA
\eqref{equ: generalized PCP} with
$f(E)=\|E\|_{\ell_{2,1}}$ by Algorithm~\ref{alg:l_21 filtering}.
\STATE {\bfseries 2.} Conduct LQ decomposition on the matrix
$\hat{A}=[\Sigma_{A_l}V_{A_l}^T,U_{A_l}^TX_r]$ as $\hat{A}=LV^T$.
\STATE {\bfseries 3.} Get the affinity matrix by $|Z|=|VV^T|$ and
conduct spectral clustering. \STATE {\bfseries Output:} Label for
each data point.
\end{algorithmic}
\label{algorithm: Fast RSI}
\end{algorithm}

Unlike LRR, the optimal solution to R-LRR problem
\eqref{equ:relaxed RSI} is symmetric and thus we could directly
use $|Z|$ as the affinity matrix instead of $|Z|+|Z^T|$. After
that, we can apply spectral clustering algorithms, such as
Normalized Cut, to cluster each data point into its corresponding
subspace.

\paragraph{Complexity Analysis}
In Algorithm \ref{alg:l_21 filtering}, Step 2 requires $O(r^2m)$
time and Step 3 requires $2rmn$ time. Thus the whole complexity of
the $\ell_{2,1}$ filtering algorithm for solving column sparse
R-PCA is $O(r^2m)+2rmn$. In Algorithm \ref{algorithm: Fast RSI}
for solving the relaxed R-LRR problem \eqref{equ:relaxed RSI},
as just analyzed, Step 1 requires $O(r^2m)+2rmn$ time. The LQ
decomposition in Step 2 requires $6r^2n$ time at the most \citep{golub2012matrix}.
Computing $VV^T$ in Step 3 requires $rn^2$ time. Thus the whole
complexity for solving \eqref{equ:relaxed RSI} is
$O(r^2m)+6r^2n+2rmn+rn^2$\footnote{Here we want to highlight the difference
between $2rmn+rn^2$ and $O(rmn+rn^2)$. The former is independent
of numerical precision. It is due to the three matrix-matrix
multiplications to form $\hat{A}$ and $Z$, respectively. In
contrast, $O(rmn+rn^2)$ usually grows with the numerical
precision. The more iterations are, the larger constant in the big
$O$ is.}. As most of low rank subspace clustering models require
$O(mn^2)$ time to solve, due to SVD or matrix-matrix
multiplication in every iteration, our algorithm is significantly
faster than state-of-the-art methods.

\section{Experiments}
\label{section:experiments}

In this section, we use experiments to illustrate the applications of our theoretical analysis.

\subsection{Comparison of Optimality on Synthetic Data}
\label{subsection:comparison of optimality on synthetic data} In
this subsection, we compare the two algorithms, partial ADM\footnote{The partial ADM method of \cite{Favaro} was designed for the $\ell_1$ norm on the noise matrix $E$, while here we have adapted it for the $\ell_{2,1}$ norm.}
~\citep{Favaro} and REDU-EXPR~\citep{Wei}, which we have mentioned in
Section~\ref{sec:fast_algs}, for solving the non-convex relaxed R-LRR
problem \eqref{equ:relaxed RSI}. Since the
traditional ADM is not-convergent, we do not compare with it.
Because we only want to compare the quality of solutions produced by
the two methods, for REDU-EXPR we temporarily do not use the $\ell_{2,1}$
filtering algorithm introduced in Section
\ref{section:applications of theoretical analysis} to solve column
sparse R-PCA.

The synthetic data are generated as follows. In the linear space
$\mathbb{R}^{1000}$, we construct five independent four dimensional
subspaces $\{S_i\}_{i=1}^5$, whose bases $\{U_i\}_{i=1}^5$ are
randomly generated column orthonormal matrices. Then 200 points
are uniformly sampled from each subspace by multiplying its basis
matrix with a $4\times 200$ Gaussian distribution matrix, whose
entries are i.i.d. $\mathcal{N}(0,1)$. Thus we obtain a $1,000\times
1,000$ sample matrix without noise.

We compare the clustering accuracies\footnote{Just as \cite{LiuG2}
did, given the ground truth labeling we set the label of a cluster
to be the index of the ground truth which contributes the maximum
number of the samples to the cluster. Then all these labels are
used to compute the clustering accuracy after comparing with the
ground truth.} as the percentage of corruptions increases, where
noises uniformly distributed on $(-0.6,0.6)$ are added at
uniformly distributed positions. We run the test ten times and
compute the mean clustering accuracy. Figure~\ref{figure:
comparison of robustness} presents the comparison on the accuracy,
where all the parameters are tuned to be the same, i.e., $\lambda=1/\sqrt{\log 1000}$. One can see that R-LRR solved by REDU-EXPR is much
more robust to column sparse corruptions than by partial ADM.

\begin{figure}[h]
\centering
\includegraphics[width=0.6\textwidth]{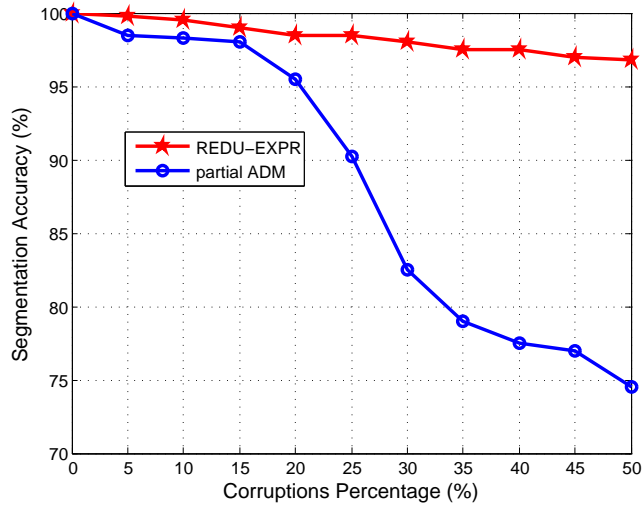}
\caption{Comparison of accuracies of solutions to relaxed
R-LRR~\eqref{equ:relaxed RSI} computed by REDU-EXPR \citep{Wei} and
partial ADM \citep{Favaro}, where the parameter $\lambda$ is
adopted as $1/\sqrt{\log n}$ and $n$ is the input size. The program is run by 10 times and the
average accuracies are reported.} \label{figure: comparison of
robustness}
\end{figure}

To further compare the optimality, we also record the objective function values computed by the two algorithms.
Since both algorithms aim at achieving the low rankness of the affinity matrix and the column sparsity of the noise matrix,
we compare the objective function of the original R-LRR \eqref{equ:original RSI}, i.e.,
\begin{equation}
\mathcal{F}(Z,E)=\mbox{rank}(Z)+\lambda||E||_{\ell_{2,0}}.
\end{equation}
As shown in Table \ref{table: comparison of optimality}, R-LRR by
REDU-EXPR could obtain smaller rank$(Z)$ and objective function than those of partial ADM. Table \ref{table: comparison of
optimality} also shows the CPU times (in seconds). One can see
that REDU-EXPR is significantly faster than partial ADM when solving the same model.

\makeatletter\def\@captype{table}\makeatother
\begin{table}[h]
\caption{Comparison of robustness and speed between partial
ADM (LRSC)~\citep{Favaro} and REDU-EXPR (RSI)~\citep{Wei} methods for solving
R-LRR when the percentage of corruptions increases. All the
experiments are run ten times and the $\lambda$ is set to be the
same: $\lambda=1/\sqrt{\log n}$, where $n$ is the data size.} \label{table: comparison of optimality}
\begin{center}
\begin{tabular}{|c|cccccc|}
\hline
Noise Percentage (\%) & 0 & 10 & 20 & 30 & 40 & 50\\
\hline\hline
Rank($Z$) (partial ADM) & 20 & 30 & 30 & 30 & 30 & 30\\
Rank($Z$) (REDU-EXPR) & 20 & 20 & 20 & 20 & 20 & 20\\
\hline
$||E||_{\ell_{2,0}}$ (partial ADM) & 0 & 99 & 200 & 300 & 400 & 500\\
$||E||_{\ell_{2,0}}$ (REDU-EXPR) & 0 & 100 & 200 & 300 & 400 & 500\\
\hline
Objective (partial ADM) & \textbf{20.00} & 67.67 & 106.10 & 144.14 & 182.19 & 220.24\\
Objective (REDU-EXPR) & \textbf{20.00} & \textbf{58.05} & \textbf{96.10} & \textbf{134.14} & \textbf{172.19} & \textbf{210.24}\\
\hline
Time (s, partial ADM) & \textbf{4.89} & 124.33 & 126.34 & 119.12 & 115.20 & 113.94\\
Time (s, REDU-EXPR) & 10.67 & \textbf{9.60} & \textbf{8.34} & \textbf{8.60} & \textbf{9.00} & \textbf{12.86}\\
\hline
\end{tabular}
\end{center}
\end{table}

\subsection{Comparison of Speed on Synthetic Data}
In this subsection, we show the great speed advantage of our
REDU-EXPR algorithm in solving low rank recovery models. We compare
the algorithms to solve relaxed R-LRR \eqref{equ:relaxed RSI}.
We also present the results of solving LRR by ADM for reference,
although it is a slightly different model. Except our $\ell_{2,1}$
filtering algorithm, all the codes run in this test are offered by
the authors of \cite{LiuG1}, \cite{LiuG3}, and \cite{Favaro}.

\comment{ Another application of our theory is, as
LRR~\citep{LiuG2,LiuG1}, LatLRR~\citep{LiuG3}, LRSC~\citep{Favaro}
and RSI~\citep{Wei} all require high complexity for computation,
by using our fast algorithm Fast RSI, the complexity of subspace
clustering could be reduced to nearly linear with respect to the
data size $n$, no matter which sparsity norm we choose.

For fair comparison, all the models in this test\footnote{Except
our fast algorithm, all the codes run in this test are offered by
the authors of \cite{LiuG1,LiuG3,Favaro}} use the $\ell_{2,1}$
norm as the sparsity norm except LatLRR, since LatLRR only makes
sense in the meaning of the $\ell_1$ norm.}

The parameter $\lambda$ is set for each method so that the highest
accuracy is obtained. We generate clean data as we did in Section
\ref{subsection:comparison of optimality on synthetic data}. The
only differences are the choice of the dimension of the ambient
space and the number of points sampled from subspaces. We compare
the speed of different algorithms on corrupted data, where the
noises are added in the same way as in \citep{LiuG2} and
\citep{LiuG1}. Namely, the noises are added by submitting to 5\%
column-wise Gaussian noises with zero means and $0.1||x||_2$
standard deviation, where $x$ indicates corresponding vector in
the subspace. For REDU-EXPR, with or without using $\ell_{2,1}$
filtering, the rank is estimated at its exact value, twenty, and
the over-sampling parameter $s_c$ is set to be ten. As the data
size goes up, the CPU times are shown in Table
\ref{table:comparison of speed}. When the corruptions are not heavy,
all the methods in this test achieve 100\% accuracy. We can see
that REDU-EXPR consistently outperforms ADM based methods. By
$\ell_{2,1}$ filtering, the computation time is further
reduced. The advantage of $\ell_{2,1}$ filtering is more salient
when the data size is larger.
\makeatletter\def\@captype{table}\makeatother
\begin{table}[h]
\caption{Comparison of CPU time (seconds) between
LRR~\citep{LiuG2,LiuG1} solved by ADM, R-LRR solved by partial
ADM (LRSC)~\citep{Favaro}, R-LRR solved by REDU-EXPR without using
$\ell_{2,1}$ filtering (RSI)~\citep{Wei}, and R-LRR solved by REDU-EXPR
using $\ell_{2,1}$ filtering as the data size increases. In this
test, REDU-EXPR with $\ell_{2,1}$ filtering is significantly
faster than other methods and its computation time grows at most
linearly with the data size.} \label{table:comparison of speed}
\begin{center}
\begin{tabular}{|c|cccc|}
\hline
Data Size & LRR  & R-LRR & R-LRR & R-LRR\\
& (ADM) & (partial ADM) & (REDU-EXPR) & (filtering REDU-EXPR)\\
 \hline\hline
$250\times250$ & 33.0879 & 4.9581 & 1.4315 & \textbf{0.6843}\\
$500\times500$ & 58.9177 & 7.2029 & 1.8383 & \textbf{1.0917}\\
$1,000\times1,000$ & 370.1058 & 24.5236 & 6.1054 & \textbf{1.5429}\\
$2,000\times2,000$ & $>$3600 & 124.3417 & 28.3048 & \textbf{2.4426}\\
$4,000\times4,000$ & $>$3600 & 411.8664 & 115.7095 & \textbf{3.4253}\\
\hline
\end{tabular}
\end{center}
\end{table}

\subsection{Test on Real Data -- AR Face Database}

Now we test different algorithms on real data, the AR Face
database, to classify face images. The AR face database contains
2,574 color images of 99 frontal faces. All the faces are with
different facial expressions, illumination conditions, and
occlusions (e.g. sun glasses or scarf, see Figure \ref{figure: AR
database}), thus the AR database is much harder than the YaleB
database for face clustering. So we replace the spectral
clustering (Step 3 in Algorithm \ref{algorithm: Fast RSI}) with a
linear classifier. The classification is done as follows:
\begin{equation}
\min_{W} ||H-WF||_F^2+\gamma||W||_F^2,
\end{equation}
which is simply a ridge regression and the regularization
parameter $\gamma$ is fixed at 0.8, where $F$ is the feature data and $H$
is the label matrix. The classifier is trained as follows.
We first run LRR or R-LRR on the original input data $X\in\mathbb{R}^{m\times n}$
and obtain an approximately block diagonal matrix $Z\in\mathbb{R}^{n\times n}$.
View each column of $Z$ as a new observation\footnote{Since $Z$ is approximately block diagonal, each column of $Z$ has few non-zero coefficients and thus the new observations are suitable for classification.}
and separate the columns of $Z$ into two parts,
where one part corresponds to the training data and the other corresponds to the test data.
We train the ridge regression model by the training samples and use the obtained $W$ to classify the test samples.

\begin{figure}[h]
\centering
\subfigure{
\includegraphics[width=0.13\textwidth]{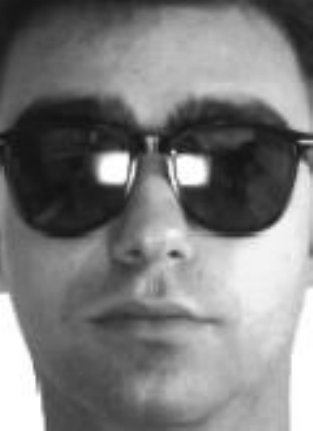}}
\subfigure{
\includegraphics[width=0.13\textwidth]{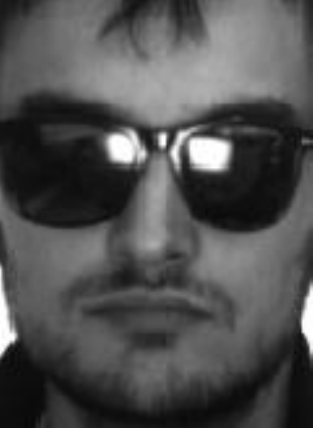}}
\subfigure{
\includegraphics[width=0.13\textwidth]{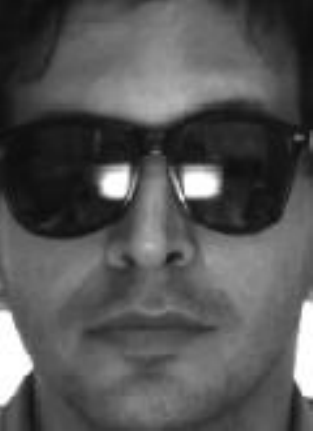}}
\subfigure{
\includegraphics[width=0.13\textwidth]{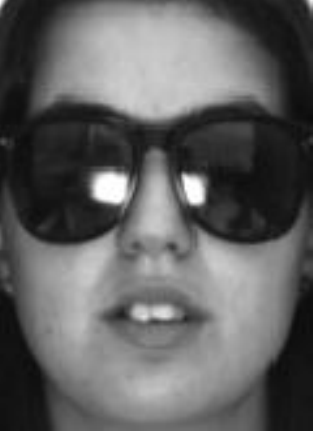}}
\subfigure{
\includegraphics[width=0.13\textwidth]{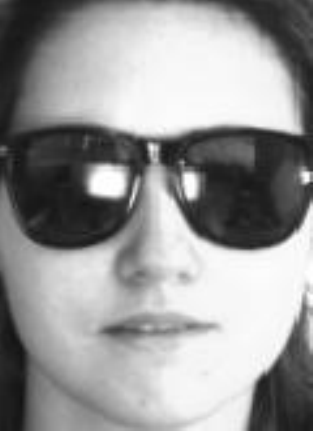}}
\subfigure{
\includegraphics[width=0.13\textwidth]{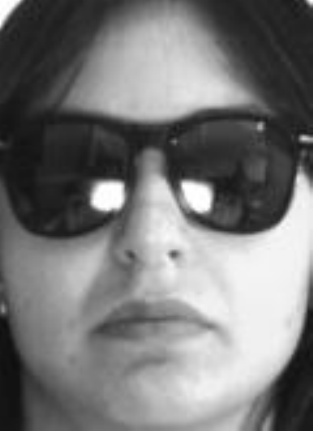}}
\subfigure{
\includegraphics[width=0.13\textwidth]{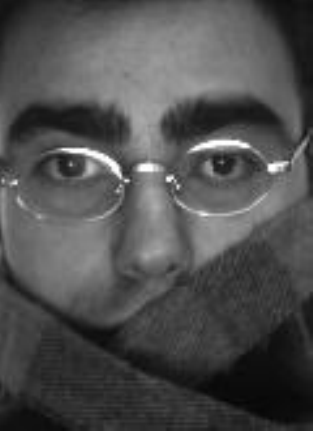}}
\subfigure{
\includegraphics[width=0.13\textwidth]{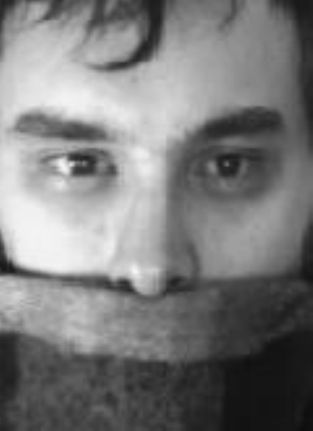}}
\subfigure{
\includegraphics[width=0.13\textwidth]{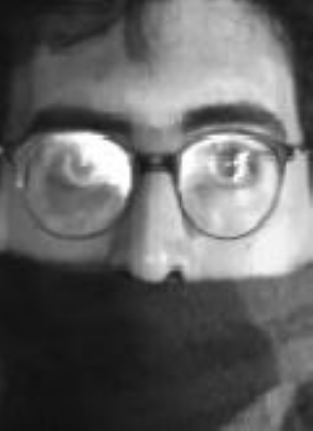}}
\subfigure{
\includegraphics[width=0.13\textwidth]{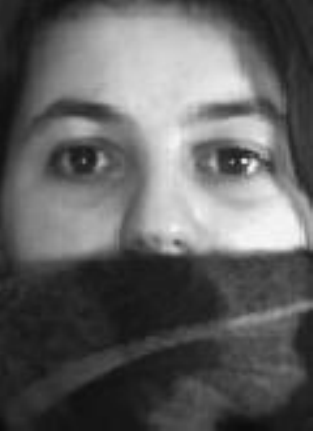}}
\subfigure{
\includegraphics[width=0.13\textwidth]{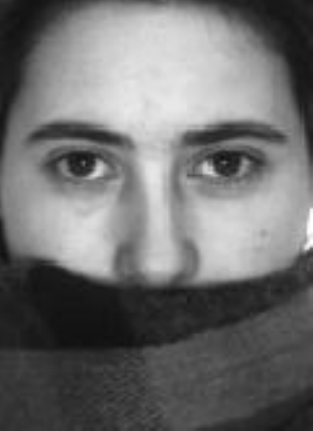}}
\subfigure{
\includegraphics[width=0.13\textwidth]{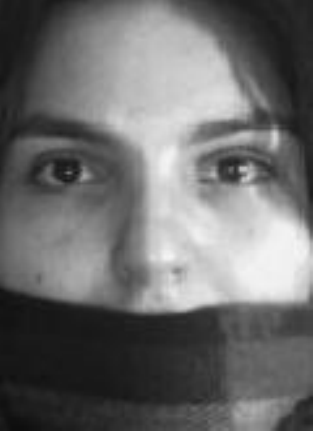}}
\caption{Examples of images with severe occlusions in the AR
database. The images in the same column belong to the same
person.} \label{figure: AR database}
\end{figure}

%\comment{
%Figure \ref{figure: block diagonal graph} illustrates
%the effectiveness for doing so: viewing each column as a sample
%and each row as a learned new feature, such samples are strictly
%separable according to the block diagonal structure. The
%classifier in the test is adopted as simple as the ridge
%regression and the regularized parameter is fixed as 0.8.
%
%
%\begin{figure}
%\centering
%\includegraphics[width=0.7\textwidth]{block_diagonal.pdf}
%\caption{Illustration for the success of subspace clustering methods in the supervised framework. By viewing each column as a sample and each row as a learned new feature, such samples are strictly separable according to the block diagonal structure.}
%\label{figure: block diagonal graph}
%\end{figure}
%}

Unlike the existing literatures, e.g., \cite{LiuG2,LiuG1}, which
manually removed severely corrupted images and shrank the input
images to small-sized ones in order to reduce the computation
load, our experiment uses all the \emph{full-sized} face images. So the
size of our data matrix is $19,800\times 2,574$, where each image
is reshaped as a column of the matrix, 19,800 is the number of
pixels in each image, and 2,574 is the total number of face
images. We test LRR~\citep{LiuG2,LiuG1} (solved by ADM) and
relaxed R-LRR (solved by partial ADM~\citep{Favaro},
REDU-EXPR~\citep{Wei}, and REDU-EXPR with $\ell_{2,1}$ filtering)
for both classification accuracy and speed. Table \ref{table: test on AR} shows the results, where the parameters have been
tuned to be the best. Since ADM based method requires too long
time to converge, we terminate it after sixty hours. This
experiment testifies to the great speed advantage of REDU-EXPR and
$\ell_{2,1}$ filtering. Note that with $\ell_{2,1}$ filtering the
speed of REDU-EXPR is three times faster than that without $\ell_{2,1}$
filtering, and the accuracy is not compromised.
\makeatletter\def\@captype{table}\makeatother
\begin{table}
\caption{Comparison of classification accuracy and speed on the AR
database with the task of face image classification. For fair comparison
of both the accuracy and the speed for different algorithms, the
parameters are tuned to be the best according to the
classification accuracy and we observe the CPU time.} \label{table: test on AR}
\begin{center}
\begin{tabular}{|c|c|cc|}
\hline
Model & Method & Accuracy & CPU Time (h)\\
\hline\hline
LRR & ADM & - & $>$10\\
R-LRR & partial ADM  & - & $>$10\\
R-LRR  & REDU-EXPR & 90.1648\% & 0.5639\\
R-LRR  & REDU-EXPR with $\ell_{2,1}$ filtering & \textbf{90.5901\%} & \textbf{0.1542}\\
\hline
\end{tabular}
\end{center}
\end{table}

\section*{Conclusion and Future Work}
\label{section:conclusion} In this paper, we investigate the
connections among solutions of some representative low rank
subspace recovery models, including R-PCA, R-LRR, R-LatLRR, and
their convex relaxations. We show that their solutions can be
mutually expressed in closed forms. Since R-PCA is the simplest
model, it naturally becomes a hinge to all low rank subspace
recovery models. Based on our theoretical findings, under certain conditions we are able to
find better solutions to low rank subspace recovery models and
also significantly speed up finding their solutions numerically,
by solving R-PCA first and then express their solutions by that of
R-PCA in closed forms. Since there are randomized algorithms for
R-PCA, e.g., we propose the $\ell_{2,1}$ filtering algorithm for
column sparse R-PCA, the computation complexities for solving
existing low rank subspace recovery models can be much lower than the existing algorithms. Extensive experiments on both synthetic and real world
data testify to the utility of our theories.

As shown in Section \ref{subsubsection: other cases}, our approach may succeed even when the conditions of Section \ref{subsubsection: Sparse Element-wise Noises}, \ref{subsubsection: Sparse Column-wise Noises}, and \ref{subsubsection: Dense Gaussian Noises} do not hold. The theoretical analysis on how data distribution influences the success of our approach will be
our future work.

\subsection*{Acknowledgments}
The authors thank Rene Vidal for valuable discussions. Hongyang Zhang and Chao Zhang are supported by National Key Basic Research Project of China (973 Program) 2011CB302400 and National Nature Science Foundation of China (NSFC grant, no. 61071156). Zhouchen Lin is supported by 973 Program of China (grant no. 2015CB3525), NSF China (grant nos. 61272341 and 61231002), and Microsoft Research Asia Collaborative Research Program. Junbin Gao is supported by the Australian Research Council (ARC) through the grant DP130100364.

\bibliographystyle{apa}
\bibliography{reference}

\begin{thebibliography}{}

\bibitem[\protect\astroncite{Avron et~al.}{2010}]{avron2010blendenpik}
Avron, H., Maymounkov, P., and Toledo, S. (2010).
\newblock Blendenpik: Supercharging {LAPACK}'s least-squares solver.
\newblock {\em SIAM Journal on Scientific Computing}, 32(3):1217--1236.

\bibitem[\protect\astroncite{Basri and Jacobs}{2003}]{Basri2003}
Basri, R. and Jacobs, D. (2003).
\newblock Lambertian reflectance and linear subspaces.
\newblock {\em IEEE Transactions on Pattern Analysis and Machine Intelligence},
  25(2):218--233.

\bibitem[\protect\astroncite{Belhumeur et~al.}{1997}]{Belhumeur:Fisherface}
Belhumeur, P., Hespanha, J., and Kriegman, D. (1997).
\newblock Eigenfaces vs. {F}isherfaces: Recognition using class specific linear
  projection.
\newblock {\em IEEE Transactions on Pattern Analysis and Machine Intelligence},
  19(7):711--720.

\bibitem[\protect\astroncite{Belhumeur and Kriegman}{1998}]{belhumeur1998set}
Belhumeur, P. and Kriegman, D. (1998).
\newblock What is the set of images of an object under all possible
  illumination conditions?
\newblock {\em International Journal of Computer Vision}, 28(3):245--260.

\bibitem[\protect\astroncite{Bull and Gao}{2012}]{Bull:imageclassification}
Bull, G. and Gao, J. (2012).
\newblock Transposed low rank representation for image classification.
\newblock In {\em IEEE International Conference on Digital Image Computing
  Techniques and Application}, pages 1--7.

\bibitem[\protect\astroncite{Cand\`{e}s et~al.}{2011}]{Candes}
Cand\`{e}s, E., Li, X., Ma, Y., and Wright, J. (2011).
\newblock Robust principal component analysis?
\newblock {\em Journal of the ACM}, 58(3):11.

\bibitem[\protect\astroncite{Chandrasekaran
  et~al.}{2011}]{chandrasekaran2011rank}
Chandrasekaran, V., Sanghavi, S., Parrilo, P., and Willsky, A. (2011).
\newblock Rank-sparsity incoherence for matrix decomposition.
\newblock {\em SIAM Journal on Optimization}, 21(2):572--596.

\bibitem[\protect\astroncite{Chen et~al.}{2011}]{ICML2011Chen_469}
Chen, Y., Xu, H., Caramanis, C., and Sanghavi, S. (2011).
\newblock Robust matrix completion and corrupted columns.
\newblock In {\em International Conference on Machine Learning}, pages
  873--880.

\bibitem[\protect\astroncite{Cheng et~al.}{2011}]{Cheng}
Cheng, B., Liu, G., Wang, J., Li, H., and Yan, S. (2011).
\newblock Multi-task low-rank affinity pursuit for image segmentation.
\newblock In {\em IEEE International Conference on Computer Vision}, pages
  2439--2446.

\bibitem[\protect\astroncite{Costeira and Kanade}{1998}]{Costeira}
Costeira, J. and Kanade, T. (1998).
\newblock A multibody factorization method for independently moving objects.
\newblock {\em International Journal of Computer Vision}, 29(3):159--179.

\bibitem[\protect\astroncite{De~La~Torre and Black}{2003}]{Torre}
De~La~Torre, F. and Black, M. (2003).
\newblock A framework for robust subspace learning.
\newblock {\em International Journal of Computer Vision}, 54(1):117--142.

\bibitem[\protect\astroncite{Elhamifar and Vidal}{2009}]{Elhamifar}
Elhamifar, E. and Vidal, R. (2009).
\newblock Sparse subspace clustering.
\newblock In {\em IEEE Conference on Computer Vision and Pattern Recognition},
  pages 2790--2797.

\bibitem[\protect\astroncite{Favaro et~al.}{2011}]{Favaro}
Favaro, P., Vidal, R., and Ravichandran, A. (2011).
\newblock A closed form solution to robust subspace estimation and clustering.
\newblock In {\em IEEE Conference on Computer Vision and Pattern Recognition},
  pages 1801--1807.

\bibitem[\protect\astroncite{Fischler and Bolles}{1981}]{fischler1981random}
Fischler, M. and Bolles, R. (1981).
\newblock Random sample consensus: a paradigm for model fitting with
  applications to image analysis and automated cartography.
\newblock {\em Communications of the ACM}, 24(6):381--395.

\bibitem[\protect\astroncite{Gear}{1998}]{Gear}
Gear, W. (1998).
\newblock Multibody grouping from motion images.
\newblock {\em International Journal of Computer Vision}, 29(2):133--150.

\bibitem[\protect\astroncite{Gnanadesikan and
  Kettenring}{1972}]{gnanadesikan1972robust}
Gnanadesikan, R. and Kettenring, J. (1972).
\newblock Robust estimates, residuals, and outlier detection with multiresponse
  data.
\newblock {\em Biometrics}, 28(1):81--124.

\bibitem[\protect\astroncite{Golub and Van~Loan}{2012}]{golub2012matrix}
Golub, G. and Van~Loan, C. (2012).
\newblock {\em Matrix computations}, volume~3.
\newblock Johns Hopkins University Press.

\bibitem[\protect\astroncite{Ho et~al.}{2003}]{Ho2003}
Ho, J., Yang, M., Lim, J., Lee, K., and Kriegman, D. (2003).
\newblock Clustering appearances of objects under varying illumination
  conditions.
\newblock In {\em IEEE Conference on Computer Vision and Pattern Recognition},
  pages 313--320.

\bibitem[\protect\astroncite{Hsu et~al.}{2011}]{hsu2011robust}
Hsu, D., Kakade, S.~M., and Zhang, T. (2011).
\newblock Robust matrix decomposition with sparse corruptions.
\newblock {\em IEEE Transactions on Information Theory}, 57(11):7221--7234.

\bibitem[\protect\astroncite{Huber}{2011}]{huber2011robust}
Huber, P. (2011).
\newblock {\em Robust statistics}.
\newblock Springer.

\bibitem[\protect\astroncite{Ji et~al.}{2010}]{Ji}
Ji, H., Liu, C., Shen, Z., and Xu, Y. (2010).
\newblock Robust video denoising using low-rank matrix completion.
\newblock In {\em IEEE Conference on Computer Vision and Pattern Recognition},
  pages 1791--1798.

\bibitem[\protect\astroncite{Ke and Kanade}{2005}]{ke2005robust}
Ke, Q. and Kanade, T. (2005).
\newblock Robust $\ell_1$-norm factorization in the presence of outliers and
  missing data by alternative convex programming.
\newblock In {\em IEEE Conference on Computer Vision and Pattern Recognition},
  pages 739--746.

\bibitem[\protect\astroncite{Lin et~al.}{2011}]{Lin2}
Lin, Z., Liu, R., and Su, Z. (2011).
\newblock Linearized alternating direction method with adaptive penalty for
  low-rank representation.
\newblock In {\em Advances in Neural Information Processing Systems}, pages
  612--620.

\bibitem[\protect\astroncite{Liu et~al.}{2013}]{LiuG1}
Liu, G., Lin, Z., Yan, S., Sun, J., and Ma, Y. (2013).
\newblock Robust recovery of subspace structures by low-rank representation.
\newblock {\em IEEE Transactions on Pattern Analysis and Machine Intelligence},
  35(1):171--184.

\bibitem[\protect\astroncite{Liu et~al.}{2010}]{LiuG2}
Liu, G., Lin, Z., and Yu, Y. (2010).
\newblock Robust subspace segmentation by low-rank representation.
\newblock In {\em International Conference on Machine Learning}, pages
  663--670.

\bibitem[\protect\astroncite{Liu and Yan}{2011}]{LiuG3}
Liu, G. and Yan, S. (2011).
\newblock Latent low-rank representation for subspace segmentation and feature
  extraction.
\newblock In {\em IEEE International Conference on Computer Vision}, pages
  1615--1622.

\bibitem[\protect\astroncite{Liu et~al.}{2012}]{LiuR2}
Liu, R., Lin, Z., De~la Torre, F., and Su, Z. (2012).
\newblock Fixed-rank representation for unsupervised visual learning.
\newblock In {\em IEEE Conference on Computer Vision and Pattern Recognition},
  pages 598--605.

\bibitem[\protect\astroncite{Liu et~al.}{2014}]{LiuR1}
Liu, R., Lin, Z., Su, Z., and Gao, J. (2014).
\newblock Linear time principal component pursuit and its extensions using
  $\ell_1$ filtering.
\newblock {\em Neurocomputing}, 142:529--541.

\bibitem[\protect\astroncite{Ma et~al.}{2007}]{Ma2}
Ma, Y., Derksen, H., Hong, W., and Wright, J. (2007).
\newblock Segmentation of multivariate mixed data via lossy data coding and
  compression.
\newblock {\em IEEE Transactions on Pattern Analysis and Machine Intelligence},
  29(9):1546--1562.

\bibitem[\protect\astroncite{Mahoney}{2011}]{mahoney2011randomized}
Mahoney, M.~W. (2011).
\newblock Randomized algorithms for matrices and data.
\newblock {\em Foundations and Trends in Machine Learning}, 3(2):123--224.

\bibitem[\protect\astroncite{Ni et~al.}{2010}]{Ni}
Ni, Y., Sun, J., Yuan, X., Yan, S., and Cheong, L. (2010).
\newblock Robust low-rank subspace segmentation with semidefinite guarantees.
\newblock In {\em IEEE International Conference on Data Mining Workshops}.

\bibitem[\protect\astroncite{Paoletti et~al.}{2007}]{Paoletti2007}
Paoletti, S., Juloski, A., Ferrari-Trecate, G., and Vidal, R. (2007).
\newblock Identification of hybrid systems---a tutorial.
\newblock {\em European Journal of Control}, 13(2--3):242--260.

\bibitem[\protect\astroncite{Peng et~al.}{2010}]{Peng}
Peng, Y., Ganesh, A., Wright, J., Xu, W., and Ma, Y. (2010).
\newblock {RASL}: Robust alignment by sparse and low-rank decomposition for
  linearly correlated images.
\newblock In {\em IEEE Conference on Computer Vision and Pattern Recognition},
  pages 763--770.

\bibitem[\protect\astroncite{Rao et~al.}{2010}]{Rao}
Rao, S., Tron, R., Vidal, R., and Ma, Y. (2010).
\newblock Motion segmentation in the presence of outlying, incomplete, or
  corrupted trajectories.
\newblock {\em IEEE Transactions on Pattern Analysis and Machine Intelligence},
  32(10):1832--1845.

\bibitem[\protect\astroncite{Tomasi and Kanade}{1992}]{Tomasi1992}
Tomasi, C. and Kanade, T. (1992).
\newblock Shape and motion from image streams under orthography---a
  factorization method.
\newblock {\em International Journal of Computer Vision}, 9(2):137--154.

\bibitem[\protect\astroncite{Vidal}{2011}]{vidal2011subspace}
Vidal, R. (2011).
\newblock Subspace clustering.
\newblock {\em IEEE Signal Processing Magazine}, 28(2):52--68.

\bibitem[\protect\astroncite{Vidal and Favaro}{2014}]{Vidal:LRSC}
Vidal, R. and Favaro, P. (2014).
\newblock Low rank subspace clustering.
\newblock {\em Pattern Recognition Letters}, 43:47--61.

\bibitem[\protect\astroncite{Vidal and Hartley}{2004}]{Vidal2004CVPR}
Vidal, R. and Hartley, R. (2004).
\newblock Motion segmentation with missing data using {PowerFactorization} and
  {GPCA}.
\newblock In {\em IEEE Conference on Computer Vision and Pattern Recognition},
  pages 85--105.

\bibitem[\protect\astroncite{Vidal et~al.}{2005}]{Vidal2005PAMI}
Vidal, R., Ma, Y., and Sastry, S. (2005).
\newblock Generalized principal component analysis ({GPCA}).
\newblock {\em IEEE Transactions on Pattern Analysis and Machine Intelligence},
  27(12):1945--1959.

\bibitem[\protect\astroncite{Vidal et~al.}{2003}]{Vidal2003CDC}
Vidal, R., Soatto, S., Ma, Y., and Sastry, S. (2003).
\newblock An algebraic geometric approach to the identification of a class of
  linear hybrid systems.
\newblock In {\em IEEE International Conference on Decision and Control}, pages
  167--172.

\bibitem[\protect\astroncite{Wang et~al.}{2009}]{WangJ}
Wang, J., Dong, Y., Tong, X., Lin, Z., and Guo, B. (2009).
\newblock Kernel {N}ystr$\ddot{\mbox{o}}$m method for light transport.
\newblock {\em ACM SIGGRAPH}, pages 1--10.

\bibitem[\protect\astroncite{Wang et~al.}{2011}]{Wang}
Wang, J., Saligrama, V., and Casta$\tilde{\mbox{n}}\acute{\mbox{o}}$n, D.
  (2011).
\newblock Structural similarity and distance in learning.
\newblock In {\em Annual Allerton Conference on Communication, Control, and
  Computing}, pages 744--751.

\bibitem[\protect\astroncite{Wei and Lin}{2010}]{Wei}
Wei, S. and Lin, Z. (2010).
\newblock Analysis and improvement of low rank representation for subspace
  segmentation.
\newblock {\em \url{http://arxiv.org/abs/1107.1561}}.

\bibitem[\protect\astroncite{Wright et~al.}{2009}]{Wright:RPCA}
Wright, J., Ganesh, A., Rao, S., Peng, Y., and Ma, Y. (2009).
\newblock Robust principal component analysis: Exact recovery of corrupted
  low-rank matrices via convex optimization.
\newblock In {\em Advances in Neural Information Processing Systems}, pages
  2080--2088.

\bibitem[\protect\astroncite{Xu et~al.}{2012}]{Xu:CSPCP}
Xu, H., Caramanis, C., and Sanghavi, S. (2012).
\newblock Robust {PCA} via outlier pursuit.
\newblock {\em IEEE Transaction on Information Theory}, 58(5):3047--3064.

\bibitem[\protect\astroncite{Yan and Pollefeys}{2006}]{Yan}
Yan, J. and Pollefeys, M. (2006).
\newblock A general framework for motion segmentation: Independent,
  articulated, rigid, non-rigid, degenerate and nondegenerate.
\newblock In {\em European Conference on Computer Vision}, volume 3954, pages
  94--106.

\bibitem[\protect\astroncite{Yang et~al.}{2008}]{Yang2008CVIU}
Yang, A., Wright, J., Ma, Y., and Sastry, S. (2008).
\newblock Unsupervised segmentation of natural images via lossy data
  compression.
\newblock {\em Computer Vision and Image Understanding}, 110(2):212--225.

\bibitem[\protect\astroncite{Zhang and Bitmead}{2005}]{ZhangC}
Zhang, C. and Bitmead, R. (2005).
\newblock Subspace system identification for training-based {MIMO} channel
  estimation.
\newblock {\em Automatica}, 41(9):1623--1632.

\bibitem[\protect\astroncite{Zhang et~al.}{2013a}]{Zhang:Counterexample}
Zhang, H., Lin, Z., and Zhang, C. (2013a).
\newblock A counterexample for the validity of using nuclear norm as a convex
  surrogate of rank.
\newblock In {\em European Conference on Machine Learning and Principles and
  Practice of Knowledge Discovery in Databases}, volume 8189, pages 226--241.

\bibitem[\protect\astroncite{Zhang et~al.}{2015}]{Zhang2014NIPS}
Zhang, H., Lin, Z., Zhang, C., and Chang, E. (2015).
\newblock Exact recoverability of robust {PCA} via outlier pursuit with tight
  recovery bounds (to appear).
\newblock In {\em AAAI Conference on Artificial Intelligence}.

\bibitem[\protect\astroncite{Zhang et~al.}{2014}]{Zhang:RobustLatLRR}
Zhang, H., Lin, Z., Zhang, C., and Gao, J. (2014).
\newblock Robust latent low rank representation for subspace clustering.
\newblock {\em Neurocomputing}, 145:369--373.

\bibitem[\protect\astroncite{Zhang et~al.}{2013b}]{Zhang:dictionaryLRR}
Zhang, Y., Jiang, Z., and Davis, L. (2013b).
\newblock Learning structured low-rank representations for image
  classification.
\newblock In {\em IEEE Conference on Computer Vision and Pattern Recognition},
  pages 676--683.

\bibitem[\protect\astroncite{Zhang et~al.}{2012}]{ZhangZ}
Zhang, Z., Ganesh, A., Liang, X., and Ma, Y. (2012).
\newblock {TILT}: Transform-invariant low-rank textures.
\newblock {\em International Journal of Computer Vision}, 99(1):1--24.

\end{thebibliography}

\end{document}